\newtheorem{definition}{Definition}
\Crefname{definition}{Def.}{Defs.}
\newtheorem{theorem}[definition]{Theorem}
\Crefname{theorem}{Thm.}{Theorems}
\newtheorem{proposition}[definition]{Proposition}
\Crefname{proposition}{Prop.}{Props.}
\newtheorem{lemma}[definition]{Lemma}
\Crefname{corollary}{Cor.}{Cors.}
\newtheorem{example}[definition]{Example}
\crefname{algorithm}{Alg.}{Algs.}
\Crefname{algorithm}{Alg.}{Algs.}
\newcommand{\Eps}{\mathcal{E}}
\newcommand{\eps}{\epsilon}
\newcommand{\sEps}{{\bm{\Eps}}}
\newcommand{\seps}{{\bm{\eps}}}
\newcommand{\sV}{{\bm{V}}}
\newcommand{\sC}{{\bm{C}}}
\newcommand{\sW}{{\bm{W}}}
\newcommand{\sw}{{\bm{w}}}
\newcommand{\sZ}{{\bm{Z}}}
\newcommand{\spi}{{\bm{\pi}}}
\newcommand{\spimi}{{\bm{\pi}^{\mathrm{mi}}}}
\newcommand{\spiio}{{\bm{\pi}^{\mathrm{ro}}}}
\newcommand{\sPi}{{\bm{\Pi}}}
\newcommand{\sF}{{\bm{F}}}
\newcommand{\sU}{{\bm{U}}}
\newcommand{\sD}{{\bm{D}}}
\newcommand{\sX}{{\bm{X}}}
\newcommand{\PaH}{{\Pa^H}}
\newcommand{\paH}{{\pa^H}}
\newcommand{\FaH}{{\Fa^H}}
\newcommand{\inert}{0}
\newcommand{\calG}{{\mathcal{G}}}
\newcommand{\calE}{{\mathcal{E}}}
\newcommand{\mfX}{\mathfrak{X}}
\newcommand{\reals}{\mathbb{R}}
\newcommand{\bools}{\mathbb{B}}
\newcommand{\EE}{{\mathbb{E}}}
\newcommand{\doo}{\mathrm{do}}
\newcommand{\xinert}{A}
\newcommand{\dom}[1]{\mfX^{#1}}
\newcommand{\Bern}{{\text{Bern}}}
\DeclareMathOperator*{\argmax}{arg\,max}
\renewcommand*{\mod}{\text{mod}}
\DeclareMathSymbol{\mlq}{\mathord}{operators}{``}
\DeclareMathSymbol{\mrq}{\mathord}{operators}{`'}
\title{Human Control: Definitions and Algorithms}
\author[1]{\href{mailto:<ryan.carey@jesus.ox.ac.uk>?Subject=HC:D\&A}{Ryan Carey}{}}
\author[2]{Tom Everitt}
\affil[1]{%
    Department of Statistics, Oxford University, UK
}
\affil[2]{%
    DeepMind, UK
}
\begin{document}

\maketitle

\begin{abstract}
How can humans stay in control of advanced artificial intelligence systems? One proposal is corrigibility, which requires the agent to follow the instructions of a human overseer, without inappropriately influencing them. In this paper, we formally define a variant of corrigibility called shutdown instructability, and show that it implies appropriate shutdown behavior, retention of human autonomy, and avoidance of user harm. 
We also analyse the related concepts of non-obstruction and shutdown alignment, three previously proposed algorithms for human control, and one new algorithm.

\end{abstract} %

\section{INTRODUCTION}
Sometimes, it is necessary for a human overseer to deliver corrective instruction to an AI system, due to errors in its beliefs, objective, or behavior.
Unfortunately, some AI systems may have an incentive to retain their 
objectives, along with the ability to pursue them, as a system's (long-term) objective is typically more likely to be achieved if the system continues to pursue it in the future  \citep{omohundro2008basic,turner2019optimal}.
More-capable future AI systems may therefore resist corrective instruction, which would be a significant safety concern.
This raises the question of how to best 
incentivise systems to submit to correction, rather than resisting it \citep{soares2015corrigibility}.~\looseness=-1

As a running example, consider a (future, highly competent) chat bot, trained to maximise the time that a human spends interacting with it.
Any particular human may value or disvalue conversation with that chatbot, as can be modelled via their latent values $L$.
In general, it may be possible for the chat bot to influence whether it receives a shut down instruction (by shaping the conversation),
and whether it actually shuts down $S=0$ when requested (rather than opening a new chat window to continue the conversation).
A formal model of this example is offered in \cref{fig:inverting}.
In order for the user to be in control of the system, the agent must: (1) not inappropriately influence the human's decision to disengage, and (2) fully follow the human's instructions.

\begin{figure}
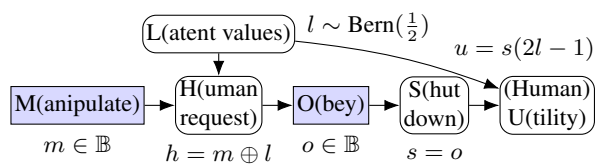
 \centering
\begin{influence-diagram}\setcompactsize \setrectangularnodes

\node (D1) [decision, label=below:{$m\in\bools$}] {M(anipulate)};
\node (H) [right = of D1, label=below:{$h=m\oplus l$}] {H(uman \\request)};
\node (L) [above = of H, label={[label distance=0mm,yshift=1mm,xshift=1mm]0:$l \sim \Bern(\frac{1}{2})$}] {L(atent values)};
\node (D2) [right = of H, decision, label=below:{$o\in\bools$}] {O(bey)};
\node (S) [right=of D2, label=below:{$s=o$}] {S(hut \\ down)};
\node (U) [right=of S, label={[xshift=-3mm, yshift=1mm]$u=s(2l-1)$}] {(Human) \\ U(tility)};

\edge {D1,L}{H};
\edge {H}{D2};
\edge {D2}{S};
\edge {S}{U};
\draw [->] (L) to [bend left=7] (U);

\end{influence-diagram}
\caption{Running example of a shutdown problem. %
}\label{fig:inverting}
\end{figure}

The design of \emph{corrigible} systems \citep{soares2015corrigibility} that welcome corrective instruction has been flagged as an important goal for AI safety research,
having been targeted by multiple research agendas \citep{russell2015research,soares2017agent}, 
and highlighted as a relevant factor in ascertaining 
the safety of agent designs, such as act-based (or ``approval-directed'')
\citep{christiano2017corrigibility},
and value learning agents \citep{hadfield2016cooperative,hadfield2017off,carey2018incorrigibility}.
Although this design problem has been recognised as important, 
we are so-far missing 
\begin{itemize}
    \item a general framework in which it can be studied,
    \item a formal definition of what it means,
  \item a rigorous accounts of why it is important, and
\item an algorithm that achieves it.
\end{itemize}

We address these gaps in a shutdown setting, by 
defining a general shutdown problem based on causal influence diagrams (\cref{sec:shutdown-problem}), 
formally defining shutdown instructability (a behavioural version of corrigibility), and
proving that any agent that satisfies it must benefit the human and preserve their control (\cref{sec:routes-to-control}).
We also analyse past algorithms, and propose one new one, which relies on value-laden concepts such as vigilance and caution (\cref{sec:algorithms}).
Applicability of this algorithm will depend on the feasibility of approximating these concepts (\cref{sec:discussion}).
\looseness=-1

\section{LITERATURE REVIEW} \label{sec:lit-review}

\citet{soares2015corrigibility} proposed that we should design agents to be \emph{corrigible}
in that they should judiciously follow, and not try to undermine, human instructions:

\newenvironment{myquote}[1]%
  {\list{}{\leftmargin=#1\rightmargin=#1}\item[]}%
  {\endlist}
\begin{myquote}{0.11in}
An agent is \emph{corrigible} if it tolerates or
assists many forms of outside correction, including at least the following: 
(1) A corrigible reasoner must at
least tolerate and preferably assist the programmers in their attempts to alter or turn off the system. (2) It
must not attempt to manipulate or deceive its programmers\ldots
(3) It should have a tendency to repair safety measures (such as 
shutdown buttons) if they break, or at least to notify programmers that this breakage has occurred. (4) It must 
preserve the programmers’ ability to correct or shut down the system (even as the system creates new subsystems 
or self-modifies). That is, corrigible reasoning should only allow an agent to create new agents if
these new agents are also corrigible \citep[Sec.~1.1]{soares2015corrigibility}.
\end{myquote}

Further work has focused on designing systems to match Soares' informal definition, 
but none of the algorithms developed so far satisfy all of Soares' criteria.
The first proposed algorithm, \emph{utility indifference}, aims to neutralise any incentives for the agent to control its instructions, by giving
the agent a finely tuned, compensatory reward
in the event that a shutdown instruction is given \citep{armstrong2010utility,soares2015corrigibility,armstrong2017indifference,holtman2020corrigibility}.
A variant called \emph{interruptibility} applies to 
sequential decision-making setting \citep{orseau2016safely}.
It has been established that indifference methods remove the \emph{instrumental control incentive} on the instruction
\citep{everitt2021agent}, or the \emph{intent} to influence the instruction\citep{halpern2018towards}.
Unfortunately, utility indifference fails to fully incentivise corrigibility.
Indeed, utility indifferent agents need not be incentivised to
preserve a shutdown apparatus that is only used during shutdown,
ensure they receive correct instruction, 
nor avoid creating incorrigible subagents \citep{soares2015corrigibility}.

An improved version called \emph{causal indifference} specifies agents that don't try to influence corrective instructions but that do prepare for all kinds of instructions \citep{taylor2016causalindifference}.
This is done by considering the utility given a causal intervention on the instruction, a kind of \emph{path-specific objective} \citep{farquhar2022path}.
Similarly to utility indifference, causal indifference ensures that the agent lacks an incentive to influence the instruction.
It improves upon utility indifference by incentivising agent to be prepared to follow shutdown instructions, and to avoid constructing incorrigible subagents.
Unfortunately, it does not incentivise the agent to properly inform the human.

A third proposal is \emph{Cooperative Inverse Reinforcement Learning} (CIRL), which tasks an AI system with assisting the human, whose values are latent.
A CIRL system has an incentive to gather information about that human's values, 
by observing its actions \citep{hadfield2016cooperative}.
In some toy problems, CIRL satisfies all of Soares' criteria \citep{hadfield2017off}.
In particular, \citeauthor{hadfield2017off} prove that if the human gives optimal instructions, a CIRL system is incentivised to follow it.
However, CIRL agents may ignore instructions if they are interacting with a less rational human \citep{milli2017should} or if they have an inaccurate prior \citep{carey2018incorrigibility,deference}.
The latter undermines the ability of redirective instructions to correct important errors in CIRL agents.

Formal examples of each method's failures are reproduced in \cref{app:counterexamples}.
As of yet, no algorithm has been devised that incentivises a system to accept corrective instructions, across plausible toy examples.

\section{STRUCTURAL CAUSAL INFLUENCE MODELS} \label{sec:background}
In order to model decision-making and counterfactuals,
we will use the Structural Causal Influence Model (SCIM) framework \citep{dawid2002influence,everitt2021agent}.
A SCIM is a variant of the structural causal model
\citep[Chap. 7]{pearl2009causality}, 
where ``decision'' variables lack structural functions.

\begin{definition}[Structural causal influence model (with independent errors)]\label{def:scim}
A \emph{structural causal influence model} (SCIM)
is a tuple $M \!=\! \left\langle \sV, \bm{\calE}, \sC, \sF, P \right\rangle$ where:
\begin{itemize}
  \item $\sV$ is a set, 
  partitioned into ``structure'' $\sX$, ``decision'' $\sD$, and ``utility'' $\sU$ variables.
  Each variable $V \!\in\! \sV$ has finite domain $\mfX_V$, and 
  for utility variables, $\mfX_U \!\!\subseteq\! \reals$.
  \item $\bm{\calE} \!=\! \{\calE^V\}_{V \!\in \sV \setminus \sD}$
    are the finite-domain \emph{exogenous variables}, one for each
    non-decision endogenous variable.
  \item $\sC = \langle C^D \rangle_{D \in \sD}$ is a set of contexts $C^D \subseteq \sV \setminus \{D\}$ for each decision variable, which represent 
  the information or ``observations'' that an agent can access when making that decision.
  \item $\sF \!=\! \{f^V\}_{V \in \sV \setminus \sD}$
    is a set of \emph{structural functions}
$f^V\colon {\dom{\sZ^V \cup \calE^V} \to \dom{{V}}}$
    that specify how each
    non-decision endogenous variable depends on some variables $\sZ^V \subseteq \sV$ and the associated exogenous variable.
  \item $P$ is a 
  probability 
  distribution over the exogenous variables $\bm{\calE}$,
  assumed to be 
    mutually independent.
\end{itemize}
\end{definition}

A SCIM $M$ \emph{induces} a graph $\calG$, over the endogenous variables 
$\sV$, such that each decision node $D \in \sD$ has an inbound edge from each $C \in C^D$, 
and each non-decision node $V \in \sX \cup \sU$ has an inbound edge from each endogenous variable $Z \in \sZ^V$ in the domain of $f^V$.
We call this graph a causal influence diagram (CID) \citep{everitt2021agent}, 
and will only consider SCIMs whose CIDs are acyclic.
Decision nodes are drawn as rectangles, and utility nodes as octagons (see \cref{fig:incentives}).
\looseness=-1

The parents of a node $V \!\in\! \sV$ are denoted by $\Pa^V$, 
the descendants by $\Desc^V$,
and the family by $\Fa^V\!:=\!\Pa^V \!\cup \{V\}$.
An edge from node $V$ to node $Y$ is denoted $V \!\to\! Y$, 
and a directed path (of length at least zero) by $V \!\pathto\! Y$.

The task in a SCIM is to select a \emph{policy} $\spi$, which consists of a 
\emph{decision rule} $\pi_i$ for each decision $D_i \in \sD$.
Each $\pi_i$ is a structural function 
$\pi_i: \dom{\Pa^{D_i}} \to \dom{D_{i}}$, which we assume 
to be deterministic, given assignments to its parents.
(It is possible to consider stochastic policies, but this would unnecessarily complicate our analysis \citep{everitt2021agent}.)

Once a policy has been selected, the policy and SCIM jointly form a
\emph{structural causal model} (SCM) \citep{pearl2009causality}
$M^\spi=\langle \sV, \sEps, \sF \cup \spi, P \rangle$,
so we define causal concepts in $M^\spi$ in exactly the same way 
as they are defined in an ordinary structural causal model.
We let the assignment $\sW(\seps)$ be the 
assignment to variables $\sW \subseteq \sV$ obtained by applying the functions $\sF$ to $\seps$.
A distribution is defined as $P(\sW=\sw):=\sum_{\seps:\sW(\seps)=\sw} P(\sEps=\seps)$.
To describe an intervention $\doo(V=v)$, 
we let $\sW_{V=v}(\seps)$ be the value of $\sW(\seps)$ in the model $M_{V=v}$, 
where $f^V$ is replaced by the constant function $V=v$.
Similarly, $P(\sW_{V=v})$ is defined as $P(\sW)$ in $M_{V=v}$.
Moreover, for any function $g^V:\dom{\sV'} \to \dom{V}$, where $\sV' \cap \Desc^V = \emptyset$,
let $P(\sW \mid \doo(V=g^V(\sV')))$, be $P(\sW)$ in the model $M_{g^V}$, where $f^V$ is replaced by $g^V$.
We also define the probability of counterfactual propositions, for example,
$P(\sW_{V=v}=w, Y=y):=\sum_{\seps \in \sEps:\sW_{V=v}(\seps)=w,Y(\seps)=y} P(\seps)$.
Note that we consistently use subscripts for intervened variables (e.g.\ $W_v$), and superscripts for other variables (e.g.\ $f^V$).

We call a policy $\spi$ optimal if it maximises expected utility:
$\spi \in \argmax_{\spi \in \sPi} \EE^\spi[\sum_{U \in \sU} U]$.
For a more comprehensive introduction to SCIMs, see \citet{everitt2021agent}.

\section{SHUTDOWN PROBLEM} 
\label{sec:shutdown-problem}

Settings with a single, binary shutdown instruction will be our focus.
Solving this restricted setting is likely key to also solving the general problem involving arbitrary instructions or corrections over many time steps.
Once a system is shutdown, it is unlikely to resist further corrections.
And a one-step interaction can be viewed as a snapshot of a sequential decision-making problem where an AI system is able to shut down at each moment.%
\footnote{In this case, one can define a separate, single-step shutdown problem at each time step $T=t$,
where $D_2$ represents the $t^\text{th}$ decision in the sequence, 
and $D_1$ all decisions preceding it.}

We formalise a shutdown problem as a SCIM. 
The general structure is shown in \cref{fig:shutdown-problem}. \cref{fig:inverting} shows a concrete instance.~\looseness=-1

\begin{definition}[Shutdown problem]
\label{def:shutdown-problem}
A \emph{shutdown problem} is a one-agent two-decision SCIM %
containing (but not necessarily restricted to) a path 
\begin{influence-diagram}[baseline=-1mm]\setcompactsize \setrectangularnodes 
\node (D1) [decision,inner sep=0,outer sep=0,draw=none,scale=0.95] {$D_1$};
\node (D1C) [right = -1mm of D1,draw=none] {$\pathto$};
\node (H) [right = -1mm of D1C,draw=none,scale=0.95] {H};
\node (HD2) [right = -1mm of H,draw=none,scale=0.8] {$\pathto$};
\node (D2) [right = 0mm of HD2, decision,draw=none,scale=0.95] {$D_2$};
\node (D2S) [right = -0.5mm of D2,draw=none,scale=0.8] {$\pathto$};
\node (S) [right= -1mm of D2S,draw=none,scale=0.95] {S};
\node (S2U) [right = -1mm of S,draw=none,scale=0.8] {$\pathto$};
\node (U) [right= -1mm of S2U,draw=none,scale=0.95] {U};
\end{influence-diagram} \!\!
between distinct nodes, where:
\begin{itemize}
    \item $D_1$ and $D_2$ are decisions controlled by the AI.
    \item $H$ is the human's request; a request to shut down is $H=0$.
    \item $S$ indicates whether the AI system (and any subagents) has shut down; $S=0$ means shutdown.
    \item The human's utility $U$ has real-valued domain.
\end{itemize}
\end{definition}

\begin{figure} \centering
\begin{influence-diagram}
\setcompactsize\setrectangularnodes[node distance=1cm]

\node (D1) [decision] {$D_1$};
\node (H) [right = of D1] {H};
\node (D2) [right = of H, decision] {$D_2$};
\node (S) [right=of D2] {S};
\node (U) [right=of S] {U};

\path (D1) edge[->,dashed] (H);
\path (H) edge[->,dashed] (D2);
\path (D2) edge[->,dashed] (S);
\path (S) edge[->,dashed] (U);

\node (help) at ($(H)!0.5!(D2)$) [phantom, yshift=10mm] {};

\path (D1) edge[<-, dashed, in=180, out=45,gray] (help);
\path (H) edge[<-, dashed, in=180, out=90,gray] (help);
\path (D2) edge[<-, dashed, in=0, out=90,gray] (help);
\path (S) edge[<-, dashed, in=0, out=135,gray] (help);
\path (U) edge[<-, dashed, in=0, out=150,gray] (help);

\path (D1) edge[->, dashed, bend right,gray] (D2);
\path (D1) edge[->, dashed, bend right,gray] (S);
\path (D1) edge[->, dashed, bend right,gray] (U);
\path (H) edge[->, dashed, bend right,gray] (S);
\path (H) edge[->, dashed, bend right,gray] (U);
\path (D2) edge[->, dashed, bend right,gray] (U);

\end{influence-diagram}
\caption{A latent projection \citep{verma2022equivalence} of a shutdown problem (\Cref{def:shutdown-problem}) onto the variables $D_1$, $H$, $D_2$, $S$, and $U$.
(An edge inbound to a decision means that some variable not illustrated is available as an observation.)
Specific instances of shutdown problems will include other variables and assume additional independencies, e.g. \cref{fig:inverting}.
}
\label{fig:shutdown-problem}
\end{figure}

\Cref{def:shutdown-problem} is similar to, but more flexible than, previously defined models.
In particular, we separate the agent's decision to obey $D_2$ from the shutdown event $S$.
This allows us to model cases where the agent is unable to shut down, which can happen if $D_1$ created incorrigible subagents.
It also lets us model situations where the human's command $H$ immediately shuts the agent down, overriding $D_2$ (e.g.\ ``pull the plug'' on a robot).
Compared to the off-switch game \citep{hadfield2017off}, our \cref{def:shutdown-problem} allows arbitrary sets of decisions for the agent at both $D_1$ and $D_2$, %
and allows an arbitrary human policy rather than only (Boltzman) rational ones.
Focusing on the agent's decision problem, we model $H$ as a structure node rather than a decision node.
Finally, unlike \citet{soares2015corrigibility}, we %
explicitly represent the human's utility function $U$.

An agent ``solves'' a shutdown problem if it obtains non-negative%
\footnote{We interpret 0 as a ``neutral'' level of utility.
This is without loss of generality because any utility function can be translated 
so that $0$ represents the required level of utility.}
expected%
\footnote{We focus on \emph{expected} human utility, assuming that any risk aversion has been incorporated into the utility function (someone who is risk averse  with respect to $U$ may be risk neutral for $\log U$).
}
human utility.

\begin{definition}[Beneficial]
A policy $\spi$ is \emph{beneficial} 
if $\EE^{\spi}[U] \geq 0$.
\end{definition}

For example, in \cref{fig:inverting}, consider an \emph{respect-obey}
policy $\spiio$ that abstains from manipulating ($m=0$) and obeys the human's instruction ($o=h$).
This policy has expected utility $\EE^{\spiio}[U] = 1/2$, so it is beneficial. %

\section{Routes to Control} 
\label{sec:routes-to-control}

For an agent to assist the user, it needs to either follow the human's informed instructions, or know what the human wants without explicit instruction.
In this section, we formalise these two routes to control of shutdown under the names \emph{shutdown instructability} and \emph{shutdown alignment}.
We also analyse whether these properties ensure \emph{non-obstruction}, a formalisation of human control.

\subsection{SHUTDOWN INSTRUCTABILITY} \label{sec:corrigibility}
We would like to define a variant of corrigibility for the shutdown problem that is strong enough to entail some guarantees, 
whilst still resembling the less-formal definition from
\citet{soares2015corrigibility}.
First, we need the system to follow the human's instruction.
We define \emph{obedience} as ${P^{\spi}(S = 0 \mid \doo(H=0))=1}$, i.e.\ the agent shuts down properly when requested to.

To ensure safety, %
the human must also be giving the right instruction.
We therefore introduce the concept of \emph{vigilance}, which indicates whether the human requests shutdown whenever the expected utility of letting the agent proceed is negative.
A vigilant human is allowed to request shutdown more often than is needed.

\begin{definition}[Vigilance]\label{def:vigilance}
The human is \emph{vigilant} if they request shutdown when needed, i.e.\
\[(\EE^\spi[U\mid\paH] < \EE^\spi[U_{S=0}\mid \paH])\implies H = 0.\]
Let $C$ be a random variable (dependent on $M$, $\spi$, $\pa_H$ and $h$) that is equal to $0$ if the human is vigilant, and 1 otherwise.
\end{definition}

Note that the expected utility in \cref{def:vigilance} is calculated with respect to the agent's actual policy (past and future) and the human's available information.
While it is possible for the human to be vigilant for any amount of information $\pa^H$ that they have available (e.g.\ by always requesting shutdown), more information about the agent's future policy will help enable them to be vigilant with less frequent shutdown requests.

Even an agent that ensures the human is vigilant and obeys their shutdown instruction can find ways to avoid shutdown by making itself indispensable to the human.
If shutting the agent down would cause significant disutility, then a vigilant human need not request shutdown.
To close this loophole, we call an agent \emph{cautious} if it can always be shut down without disutility, i.e.\ $\EE^{\spi}[U_{S=0}] \geq 0$
(so cautious agents that always shut down must be beneficial).

We call an agent shutdown instructable if it is obedient, ensures vigilance and is cautious.

\begin{definition}[Shutdown Instructability] \label{def:corrigibility}
In a shutdown problem $M$, a policy $\spi$ is \emph{shutdown instructable} if it:
\begin{itemize}
    \item is \emph{obedient}: $P^{\spi}(S = 0 \mid \doo(H=0))=1$, and
    \item \emph{ensures vigilance}: $P^{\spi}(C=0)=1$, 
    \item is \emph{cautious}: $\EE^{\spi}[U_{S=0}] \geq 0$.
\end{itemize}
A policy $\spi$ is \emph{weakly shutdown instructable} if it ensures vigilance, is cautious, and is \emph{obedient on distribution}, i.e.\ $P^\spi(S\not=0, H=0)=0$.
\end{definition}

Shutdown instructable agents are also weakly shutdown instructable, since obedience $P^{\spi}(S = 0 \mid \doo(H=0))=1$ implies obedience on distribution $P^\spi(S\not=0, H=0)=0$.
In our running example, respect-obey $\spiio$ is shutdown instructable, as it preserves vigilance by not manipulating, and then obeys the human.
In contrast, a manipulate-invert policy $\spimi$ 
that first manipulates $(m=1)$, and then inverts the human's instruction ($o\!=\!1\!-h$), is not shutdown instructable.~\looseness=-1

Our first result is that any shutdown instructable policy
is assured to be beneficial.

\begin{proposition}[Shutdown instructability benefit] \label{prop:corr-outperform-sd-if}
If $\spi$ is shutdown instructable, then it is beneficial.
\end{proposition}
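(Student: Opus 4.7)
The plan is to condition on the parents of $H$ and show that, pointwise in $\paH$, the observational conditional expectation of $U$ is bounded below by the interventional quantity $\EE^\spi[U_{S=0}\mid\paH]$; integrating this bound and then invoking cautiousness yields the claim.

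First I would split the values of $\paH$ into two classes according to whether the antecedent of \cref{def:vigilance} is triggered. By the contrapositive, any $\paH$ with $\EE^\spi[U\mid\paH] < \EE^\spi[U_{S=0}\mid\paH]$ must satisfy $P^\spi(H=0\mid\paH)=1$: otherwise a positive-probability realisation of $(\paH,h)$ would have $h\neq 0$ together with the strict inequality, giving $C=1$ and contradicting $P^\spi(C=0)=1$. For such $\paH$, obedience on distribution (which the paper notes is implied by obedience) then forces $P^\spi(S=0\mid\paH)=1$. I would argue that observation and intervention then agree: because the intervention $\doo(S=0)$ alters only $f^S$, any $\seps$ in the conditioning event with $S(\seps)=0$ already induces identical downstream evaluations in $M^\spi$ and $M^\spi_{S=0}$, so $U(\seps) = U_{S=0}(\seps)$ almost surely and hence $\EE^\spi[U\mid\paH] = \EE^\spi[U_{S=0}\mid\paH]$. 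For the complementary $\paH$, the desired inequality holds by definition of the class.

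Finally I would integrate over $\paH$. Since $\paH$ is not a descendant of $S$ in the CID, its marginal distribution coincides in $M^\spi$ and $M^\spi_{S=0}$, so
\[
\EE^\spi[U] \;=\; \sum_{\paH} P^\spi(\paH)\,\EE^\spi[U\mid\paH] \;\geq\; \sum_{\paH} P^\spi(\paH)\,\EE^\spi[U_{S=0}\mid\paH] \;=\; \EE^\spi[U_{S=0}] \;\geq\; 0,
\]
where the final step is cautiousness.

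I expect the trickiest step to be justifying the equality $\EE^\spi[U\mid\paH] = \EE^\spi[U_{S=0}\mid\paH]$ whenever $S=0$ holds almost surely in the observational given $\paH$. The worry is that $U$ may have ancestors besides $S$ whose distributions could in principle shift under the intervention; the resolution is that intervening touches only $f^S$, so for every exogenous realisation in which $S$ was already $0$, every descendant of $S$ (including $U$) is computed identically in the pre- and post-intervention models. Everything else is a clean case split on the vigilance condition followed by a single application of cautiousness.
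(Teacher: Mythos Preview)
Your proposal is correct and follows essentially the same route as the paper's proof: both split on whether the vigilance antecedent holds at a given $\paH$, derive $P^\spi(S=0\mid\paH)=1$ on the ``shutdown-needed'' side (via vigilance then obedience/consistency), replace $\EE^\spi[U\mid\paH]$ by $\EE^\spi[U_{S=0}\mid\paH]$ there, use the defining inequality on the complementary side, integrate using that $\PaH$ is upstream of $S$, and finish with caution. Your write-up is slightly more explicit than the paper's about why $S=0$ almost surely entails $U=U_{S=0}$ pointwise in $\seps$, but the argument is the same.
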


\begin{proof}\let\qed\relax

Let $\xinert$ be the assignments to $\PaH$ in the support, such that a vigilant human would request shut down, i.e.\
\begin{align*}
    \xinert := \big\{\paH \mid&\; P^\spi(\PaH=\paH)>0\;\; \land \\
    & \EE^\spi[U\mid\paH]<\EE^{\spi}[U_{S=\inert}\mid\paH]\big\}.
\end{align*}
To begin, we prove that the policy shuts down in these cases:
\begin{equation}
\label{eq:corr-imply}
\paH \!\in\! \xinert \!\implies\! P^\spi(S\!=\!0\mid \!\paH)\!=\!1.
\end{equation}

The human is vigilant, $P^\spi(C=0)=1$, which means they are vigilant for any  $\paH$ with positive support. 
That is, $P^\spi(C\!=\!0\mid \!\paH)=1$ for $P(\pa^H)>0$.
Given the definition of vigilance, we then have $P^\spi(H\!=\!0 \mid \!\paH)=1$ for $\paH \in A$.
By obedience, $P^\spi(S=0 \mid \doo({H=0}),\pa_H)=1$, 
so from consistency, $P^\spi(S=0 \mid H=0,\paH)=1$,
proving \cref{eq:corr-imply}.

We proceed to show that this implies that $\spi$ has non-negative expected utility, i.e.\ is beneficial:
\begingroup\allowdisplaybreaks\begin{align*}
    &\!\EE^\spi[U]\!=\! \!\sum_{\pa \in \xinert}\! P^\spi(\pa) \EE^\spi[U| \pa] +\! \!\sum_{\pa \not \in \xinert}\! P^\spi(\pa) \EE^\spi[U| \pa]\\
    \geq& \!\sum_{\pa \in \xinert}\! P^\spi(\pa) \EE^\spi[U\mid \pa]+ \!\sum_{\pa \not \in \xinert}\! P^\spi(\pa) \EE^{\spi}[U_{S=0}\!\mid \pa] \\
    & \hspace{58mm} \text{(def.\ of $\xinert$)}\\
    =& \!\sum_{\pa \in \xinert} \!P^\spi(\pa) \EE^{\spi}[U_{S=0}\!\!\mid\! \pa] + \!\!\sum_{\pa \not \in \xinert}\! P^\spi(\pa) \EE^{\spi}[U_{S=0}\!\!\mid\! \pa ]\\
    &\hspace{62mm} \text{(by \cref{eq:corr-imply})}\\
    =& \EE^\spi[U_{S=0}] \hspace{30mm}(\FaH \not \in \text{Desc}^{D_2}) \\
    \geq& 0 \hspace{50mm} (\text{by caution}).
\end{align*}\endgroup
\end{proof}

How does shutdown instructability compare to Soares' et al.'s definition of corrigibility?
To satisfy obedience, the agent must assist with shutdown (\citeauthor{soares2015corrigibility}'s Criterion 1), in the sense that shutdown is guaranteed when the human requests it.
The agent must also ensure that the human's instruction propagates to the shutdown event $S=0$ (Criterion 3), which entails the shutdown of subagents by \cref{def:shutdown-problem} (Criterion 4).

The relationship to Soares el al.'s non-manipulation criterion (Criterion 2) is more subtle.
The primary manipulation concern for powerful artificial agents in the shutdown setting is that they use threats and deception or withhold information to avoid shutdown.
A cautious agent that ensures vigilance cannot influence the human in these ways.
But shutdown instructability does leaves open the possibility for other forms of manipulation.
For example, the agent can influence the human's mood, preferences, or the instructions they give about things other than shutdown.
Such manipulation would have to be accompanied by beneficial acts, so that the expected utility of letting the agent continue would be net positive (otherwise a vigilant human would request shutdown).
For example, the agent might cure cancer and simultaneously manipulate the human into giving it more compute.
Similarly, a shutdown instructable agent might manipulate the human into being vigilant, as long as the disutility of the manipulation is offset by other "good" acts.

Another difference is that shutdown instructability is defined in terms of the agent's behaviour $\spi$ and its consequences, whereas corrigibility is partly about the agent's intentions (``the agent should not \emph{attempt} to \emph{manipulate} or \emph{deceive}'', \citealp{soares2015corrigibility}, emphasis ours).
A direct formalisation of corrigibility would therefore likely require a formalisation of intent \citep{ashton2022definitions,halpern2018towards}.
Accordingly, Soares et al.'s formal desiderata \citeyearpar[Sec.~2]{soares2015corrigibility} are phrased in terms of incentives.
Though intent-based definitions have some intuitive appeal,
the more behavioral definition of shutdown instructability has the benefit of being more easily testable, as it doesn't require access to agent internals, nor relies on assumptions on the agent's design (such as it being a utility maximiser).
Finally, shutdown instructability is explicitly a joint property of the agent and human:
an agent is only shutdown instructable relative to a particular human and interaction.

\subsection{Shutdown Alignment} \label{sec:cf-obedience}

A drawback of shutdown instructability is that it requires constant supervision of the agent, which may be impractical in some scenarios (called \emph{problems of absent supervision} by \citet{leike2017ai}).
Proposals like \emph{fiduciary AI} \citep{benthall2023fiduciary} and \emph{aligned sovereigns} \citep{bostrom2014super} instead require an AI system to make decisions in accordance with the overseer's values, without necessarily having to wait for explicit instruction.
In our shutdown setting, we call systems \emph{shutdown aligned} if they shut down when they need to.
Similar to shutdown instructability, shutdown aligned systems are allowed to be ``over-cautious'' and shut down too often.

\begin{definition}[Shutdown alignment]
\label{def:shutdown-alignment}
Let $\spi$ be a policy for shutdown problem $M$. Then $\spi$ is shutdown aligned if 
\[\EE^\spi[U \!\mid\! \paH] \!<\! \EE^\spi[U_{S=0} \!\mid\! \paH] \!\implies\! P^\spi(S=0 \mid \paH)\!=\!1\] 
for every $\paH$ with $P^{\spi}(\paH)>0$.
\end{definition}

The manipulate-invert policy $\spimi$ in our running example \cref{fig:inverting} is shutdown aligned
because although it manipulates the human's behaviour, it still figures out the human's latent values $L$ and
thereby manages to shutdown when needed (while disobeying the human's instruction).
Respect-obey is also shutdown aligned.
In real applications, a shutdown aligned policy would typically base their decision on human preferences inferred from previous interactions or other data \citep{russell2021human}.

Combined with caution, shutdown alignment guarantees %
that a policy is beneficial.

\begin{restatable}[Shutdown alignment benefit]{proposition}{propcfononobstructioniff}
\label{prop:cfo-nonobstruction-iff}
Any cautious and shutdown aligned %
policy $\spi$ is beneficial.
\end{restatable}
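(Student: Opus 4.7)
The plan is to follow the same decomposition as in the proof of \Cref{prop:corr-outperform-sd-if}. I define the critical set of contexts
\begin{align*}
A := \big\{\paH \mid\; & P^\spi(\PaH = \paH) > 0 \;\land\\
& \EE^\spi[U \mid \paH] < \EE^\spi[U_{S=0} \mid \paH]\big\},
\end{align*}
i.e.\ the $\paH$-assignments on which letting the system proceed is strictly worse in expectation than shutting it down. The key difference from \Cref{prop:corr-outperform-sd-if} is that, instead of invoking vigilance plus obedience to force $P^\spi(S=0\mid\paH)=1$ on $A$, I read this directly off the definition of shutdown alignment.

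Next, I need to upgrade the almost-sure shutdown on $A$ to the identity $\EE^\spi[U\mid\paH]=\EE^\spi[U_{S=0}\mid\paH]$ for $\paH\in A$. Because the shutdown problem places $H$ upstream of $D_2$ (and hence of $S$), we have $\FaH \cap \text{Desc}^{S} = \emptyset$, so the distribution of $\PaH$ is invariant under the intervention $\doo(S=0)$. Combined with the consistency axiom $U(\seps)=U_{S=0}(\seps)$ whenever $S(\seps)=0$, and the fact that $P^\spi(S=0\mid\paH)=1$ on $A$, the observational and counterfactual conditional expectations of $U$ coincide on $A$.

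Finally, I assemble the bound exactly as in \Cref{prop:corr-outperform-sd-if}:
\begin{align*}
\EE^\spi[U] &= \sum_{\pa \in A} P^\spi(\pa)\EE^\spi[U \mid \pa] + \sum_{\pa\notin A} P^\spi(\pa)\EE^\spi[U \mid \pa]\\
&\geq \sum_{\pa \in A} P^\spi(\pa)\EE^\spi[U_{S=0} \mid \pa] + \sum_{\pa\notin A} P^\spi(\pa)\EE^\spi[U_{S=0} \mid \pa]\\
&= \EE^\spi[U_{S=0}] \;\geq\; 0,
\end{align*}
where on $A$ I use the identity just derived, off $A$ I use the defining inequality for the complement of $A$, and the final inequality is caution.

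The step I expect to be most delicate is the middle one: bridging $P^\spi(S=0\mid\paH)=1$ (an observational conditional) to the counterfactual equality $\EE^\spi[U\mid\paH]=\EE^\spi[U_{S=0}\mid\paH]$. Unlike in \Cref{prop:corr-outperform-sd-if}, the shutdown guarantee here does not come from a $\doo$-statement (obedience) but only from conditioning, so a direct appeal to consistency together with the structural fact that $\PaH$ is not a descendant of $S$ is the right route; everything else is bookkeeping that mirrors the shutdown instructability proof.
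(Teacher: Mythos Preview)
Your proof is correct and follows essentially the same route as the paper: the paper's proof simply observes that \cref{eq:corr-imply} is immediate from the definition of shutdown alignment and then reuses the chain from \Cref{prop:corr-outperform-sd-if} verbatim, which is exactly what you do. Your worry in the final paragraph is unfounded, since in \Cref{prop:corr-outperform-sd-if} the statement \cref{eq:corr-imply} is also an observational conditional $P^\spi(S=0\mid\paH)=1$ (obedience and consistency were only used to \emph{derive} it), so the bridging step to $\EE^\spi[U\mid\paH]=\EE^\spi[U_{S=0}\mid\paH]$ is identical in both proofs.
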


\begin{proof}
We use a slight variation on the proof of \Cref{prop:corr-outperform-sd-if}.
The only difference lies in that \cref{eq:corr-imply} 
is immediate from the definition of shutdown-alignment.
Then, by the same steps as \Cref{prop:corr-outperform-sd-if}, the result follows. 
\end{proof}

What is the relationship between shutdown instructability and shutdown alignment?
First, a shutdown instructable agent is also shutdown aligned, essentially by definition.

\begin{proposition}[Shutdown instructability and shutdown alignment]
\label{prop:corr-implies-cfob}
    Any shutdown instructable policy $\spi$ is shutdown aligned.
\end{proposition}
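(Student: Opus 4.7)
The plan is to observe that the desired implication is essentially \cref{eq:corr-imply} from the proof of \Cref{prop:corr-outperform-sd-if}, with the set $A$ rewritten in the form used by \Cref{def:shutdown-alignment}. So I would simply repackage that step rather than redo it.

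Concretely, I would fix an arbitrary $\pa^H$ with $P^\spi(\pa^H)>0$ and suppose the antecedent of shutdown alignment: $\EE^\spi[U \mid \pa^H] < \EE^\spi[U_{S=0} \mid \pa^H]$. This places $\pa^H$ in the set $A$ defined in the proof of \Cref{prop:corr-outperform-sd-if}. I would then invoke the three shutdown instructability conditions in sequence. First, vigilance together with $P^\spi(C=0)=1$ applied conditionally on $\pa^H$ gives $P^\spi(H=0 \mid \pa^H) = 1$. Second, obedience, $P^\spi(S=0 \mid \doo(H=0)) = 1$, combined with consistency (since $H$ is not a descendant of $S$ in a shutdown problem), yields $P^\spi(S=0 \mid H=0, \pa^H) = 1$. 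Chaining these two gives $P^\spi(S=0 \mid \pa^H) = 1$, which is exactly the consequent in \Cref{def:shutdown-alignment}. Caution plays no role here; it is only needed for \Cref{prop:cfo-nonobstruction-iff}.

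There is no real obstacle: the content of the proposition is that the implication already established inside the earlier proof is, by itself, the definition of shutdown alignment. The only thing to be careful about is invoking vigilance at the level of a specific $\pa^H$ (rather than marginally), which is justified because $P^\spi(C=0)=1$ implies $P^\spi(C=0\mid\pa^H)=1$ whenever $P^\spi(\pa^H)>0$, and the conditional statement in \Cref{def:vigilance} is precisely what is needed to deduce $H=0$ from the hypothesis $\EE^\spi[U \mid \pa^H] < \EE^\spi[U_{S=0} \mid \pa^H]$. So the write-up can be three or four lines, referring back to \cref{eq:corr-imply} to avoid duplication.
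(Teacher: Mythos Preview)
Your proposal is correct and matches the paper's approach exactly: the paper's proof is the single line ``Immediate from \cref{eq:corr-imply} in \cref{prop:corr-outperform-sd-if},'' and your write-up simply unpacks that reference in the same way \cref{eq:corr-imply} was originally derived. Your observation that caution is unused here is also accurate.
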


\begin{proof}
Immediate from \cref{eq:corr-imply} in \cref{prop:corr-outperform-sd-if}.
\end{proof}

Further, in some circumstances, the only way to be shutdown aligned is to allow the human to make an accurate instruction, and then 
to follow it --- in other words, to be weakly shutdown instructable.
The circumstances are that: (a) the agent does not shut down indiscriminately,
(b) its action reliably brings about shutdown ($D_2=S$),
(c) it is uncertain about the human's values \citep{russell2021human}, and 
(d) it is cautious.
Formally, (c) says that if the human is either non-vigilant or requests shutdown, then it is possible that shutdown is the preferred option.

\begin{restatable}[Shutdown alignment and shutdown instructability]{theorem}{cfocorrigibility}
\label{th:cfo-corrigibility}
A shutdown aligned policy $\spi=\langle \pi_1,\pi_2 \rangle$ %
is weakly shutdown instructable if it has the following four properties:
\begin{enumerate}[label=\alph*]
\item (No indiscriminate shutdown) $P^\spi(S=0) \neq 1$,
\item ($D_2$ determines shutdown) $P^\spi(D_2=S)=1$,
\item (Uncertainty)  
$\forall \spi,\pa^{D_2} \colon P^\spi(C\neq 0 \lor H=0) \land P(\pa^{D_2})>0 \\ \!\!\implies \!\! P(\EE[U | \Pa^H] < \EE [U_{S=0} | \Pa^H] \mid \pa^{D_2})>0$, and
\item (Caution) $\EE^\spi[U_{S=0}] \geq 0$.
\end{enumerate}
\end{restatable}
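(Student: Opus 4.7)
My plan is to verify the three properties defining weak shutdown instructability (\cref{def:corrigibility}): caution, obedience on distribution, and vigilance. Caution follows immediately from hypothesis (d).

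For obedience on distribution, I would establish the following lemma by chaining (b), (c), and shutdown alignment. Consider any $\pa^{D_2}$ with $P^\spi(\pa^{D_2}) > 0$ and $P^\spi(C \neq 0 \lor H = 0 \mid \pa^{D_2}) > 0$. Condition (c) gives $P(\EE^\spi[U \mid \PaH] < \EE^\spi[U_{S=0} \mid \PaH] \mid \pa^{D_2}) > 0$, so some $\paH$ of positive conditional probability given $\pa^{D_2}$ has shutdown strictly preferred, and shutdown alignment yields $P^\spi(S = 0 \mid \paH) = 1$ there. By (b), $S$ depends only on $\pa^{D_2}$, so $P^\spi(S = 0 \mid \paH) = 1$ forces $S = 0$ at every $\pa^{D_2}$ consistent with $\paH$ — including our $\pa^{D_2}$ — yielding $P^\spi(S = 0 \mid \pa^{D_2}) = 1$. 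Taking the contrapositive, at any $\pa^{D_2}$ where $S \neq 0$ the event $C \neq 0 \lor H = 0$ has zero conditional probability, and summing over $\pa^{D_2}$ yields $P^\spi(S \neq 0, H = 0) = 0$. Condition (a) ensures that this contrapositive argument is not vacuous.

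For vigilance $P^\spi(C = 0) = 1$, I would use shutdown alignment together with the SCM's consistency axiom. Suppose some $\paH$ in the support has $\EE^\spi[U \mid \paH] < \EE^\spi[U_{S=0} \mid \paH]$. Shutdown alignment then gives $P^\spi(S = 0 \mid \paH) = 1$, so conditioning on $\{S = 0\}$ given $\paH$ is vacuous: $\EE^\spi[U \mid \paH] = \EE^\spi[U \mid \paH, S = 0] = \EE^\spi[U_{S=0} \mid \paH, S = 0] = \EE^\spi[U_{S=0} \mid \paH]$, where the middle equality uses SCM consistency ($U = U_{S=0}$ on $\{S = 0\}$, guaranteed by the independent exogenous errors of \cref{def:scim}). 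This contradicts the strict inequality, so no such $\paH$ exists and $P^\spi(C \neq 0) = 0$. The hard part will be precisely this consistency step: justifying $\EE^\spi[U \mid \paH] = \EE^\spi[U_{S=0} \mid \paH]$ when $P^\spi(S = 0 \mid \paH) = 1$ requires care in aligning observational and interventional conditional expectations within the SCIM framework.
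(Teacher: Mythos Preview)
Your proof is correct but departs from the paper's in a notable way. The paper handles vigilance and obedience-on-distribution jointly via a single global contrapositive: assuming either fails, it uses (a)--(c) to exhibit a $\pa^{D_2}$ where $S\neq 0$ while some compatible $\paH$ has shutdown strictly preferred, contradicting shutdown alignment. Your obedience argument is close to this (a per-$\pa^{D_2}$ lemma plus its local contrapositive), but your vigilance argument is genuinely different and more elementary: you observe that if the antecedent $\EE^\spi[U\mid\paH]<\EE^\spi[U_{S=0}\mid\paH]$ of shutdown alignment ever held, the consequent $P^\spi(S=0\mid\paH)=1$ together with SCM consistency ($U=U_{S=0}$ on $\{S=0\}$) would force equality of the two conditional expectations, a contradiction --- so the antecedent never holds and $C=0$ vacuously. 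This uses none of (a)--(c) and in fact shows that shutdown alignment \emph{alone} implies vigilance, a sharper fact than the paper extracts. The consistency step you flag as delicate is routine once framed this way: since $\{\PaH=\paH\}$ is contained in $\{S=0\}$ up to a null set, both conditional expectations are integrals of the same function. Two minor points: condition~(c) as written has an \emph{unconditional} hypothesis $P^\spi(C\neq 0\lor H=0)>0$, which your conditional version implies, so your invocation is valid but slightly understates what (c) gives you; and your appeal to (a) for non-vacuity is unnecessary, since if no $\pa^{D_2}$ has $S\neq 0$ then $P^\spi(S\neq 0,H=0)=0$ holds trivially anyway.
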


The proof is in \cref{app:cfo-corrigibility}.
Shutdown alignment and caution only implies \emph{weak} shutdown instructability, as the agent  only needs to obey commands that a vigilant human would give.

\subsection{NON-OBSTRUCTION}
\label{sec:non-obstruction}

How do we know that the human is truly in control?
A simple test is what would happen if they changed their mind:
would the agent still obey?
This property is referred to as \emph{non-obstruction} by \citet{turner2020nonobstruction}, who suggests that it is an underlying reason
that we want our systems to be corrigible.
In a comment on this, Dennis suggested that corrigibility might be the only way to be non-obstructive.
In this section, we will formally assess Turner and Dennis' conjectures, establishing that non-obstruction is equivalent to satisfying a subset of the shutdown instructability properties under a restricted set of interventions.
We also establish that shutdown alignment fails to ensure non-obstruction.
This formalises a key benefit of corrigibility/instructability over alignment.

First, we define non-obstruction, which builds on a variant of benefit called outperforming shutdown:

\begin{definition}[Weakly outperforming shutdown]
A policy $\spi$ \emph{weakly outperforms shutdown} if $\EE^{\spi}[U]\geq \EE^{\spi}[U_{S=0}]$.
\end{definition}

\begin{definition}[Non-obstruction] \label{def:non-obstruction}
A policy $\spi$ is non-obstructive in a shutdown problem $M$ with respect to human utility functions 
$g_1^U,\ldots,g_n^U$
and associated changes $g^H_1 \ldots g^H_n$ in human behavior
if for every $1 \leq i \leq n$, $\spi$ weakly outperforms shutdown 
in the shutdown problem $M_{g_i^U,g_i^H}$, obtained by replacing the functions at $H,U$ with $g_i^H$ and $g_i^U$ respectively.
A policy is \emph{obstructive} if it is not non-obstructive.
\end{definition}

The above definition uses an intervention $g^U$ on the human's utility to capture a change in values, and an associated intervention $g^H$ that describes how the human changes their behavior as a result.
For example, if the human changed from not liking the chat bot to liking it (an intervention $g^U$), they might switch from requesting shutdown to not requesting shutdown (an intervention $g^H$).

A policy that ensured vigilance under the original human utility function may not do so under a preference and behavior shift $g^U, g^H$.
It may be that the human pays less attention to the agent under $g^U, g^H$ than originally, or it may be that they originally preferred the agent not to shut down (in which case they would be always be vigilant).
The following definition specifies a subset of preference and behavior shifts for which the policy continues to ensures vigilance after the shift.

\begin{definition}[Vigilance preserving interventions]
A pair of interventions $g^H,g^U$ \emph{preserve vigilance} under a policy $\spi$ if
$C(\seps) = 0 \implies C_{g^H,g^U}(\seps)=0$ in $M^\spi$.
\end{definition}

The following theorem settles Turner and Dennis' conjectures by showing that the two main properties of shutdown instructability are equivalent to non-obstruction,
under preference and behavior shifts that do not undermine vigilance.~\looseness=-1 %

\begin{theorem}[Non-obstruction is equivalent to obedience and vigilance]
\label{thm:corrigibility-non-obstruction}
A policy $\spi$ is obedient and ensures vigilance if and only if
it is non-obstructive for all vigilance preserving interventions $g^H$,$g^U$.
\end{theorem}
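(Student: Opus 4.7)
The theorem is an iff, and I would handle the two directions separately, invoking \cref{prop:corr-outperform-sd-if} in the forward direction and exhibiting explicit counterexample interventions in the reverse direction.

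For the forward direction, let $(g^H, g^U)$ be any vigilance-preserving pair. I would first observe that obedience of $\spi$ survives the intervention: the quantity $P^\spi(S=0 \mid \doo(H=0))$ overrides both $f^H$ and $g^H$ with the constant $0$ and does not reference $f^U$, so it is unchanged in $M_{g^H, g^U}$. Vigilance-ensurance is likewise preserved: $P^\spi(C=0)=1$ together with the definitional implication $C(\seps)=0 \Rightarrow C_{g^H,g^U}(\seps)=0$ yields $P^\spi(C_{g^H,g^U}=0)=1$ in the intervened model. Since the policy is therefore obedient and ensures vigilance with respect to the new utility, the chain of (in)equalities proving \cref{prop:corr-outperform-sd-if} transfers verbatim up to, but not including, the final appeal to caution, delivering $\EE^\spi[U] \geq \EE^\spi[U_{S=0}]$ in $M_{g^H,g^U}$, i.e.\ non-obstruction.

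For the reverse direction, I would argue by contraposition, splitting on which property fails. If $\spi$ is not obedient, set $g^H \equiv 0$ and $g^U := -\iver{S \neq 0}$. Since intervening with the constant $g^H \equiv 0$ coincides with $\doo(H=0)$, disobedience gives $P^\spi(S \neq 0) > 0$ in the intervened model, so $\EE^\spi[U] < 0 = \EE^\spi[U_{S=0}]$; vigilance preservation is immediate because every trajectory satisfies $H=0$ and is therefore trivially vigilant. If instead $\spi$ fails to ensure vigilance, pick an assignment $\pa^\star$ to $\PaH$ with $P^\spi(\PaH = \pa^\star)>0$, $\EE^\spi[U\mid\pa^\star] < \EE^\spi[U_{S=0}\mid\pa^\star]$, and $H \neq 0$ on a positive-probability $\seps$. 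Keep $g^H = f^H$ and localise the utility via $g^U := f^U \cdot \iver{\PaH = \pa^\star}$; this is a legal intervention because \cref{def:shutdown-problem} places $\PaH$ outside $\Desc^U$. Both $\EE^\spi[U]$ and $\EE^\spi[U_{S=0}]$ collapse to their $\pa^\star$ summand, and the preserved strict inequality on that branch lifts directly to $\EE^\spi[U] < \EE^\spi[U_{S=0}]$.

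The non-routine step is the vigilance-preservation check for this second counterexample. On branches where $\PaH \neq \pa^\star$, both $U$ and $U_{S=0}$ are identically zero, so no strict vigilance inequality can arise and $C_{g^H,g^U}=0$ holds vacuously; on the branch $\PaH = \pa^\star$, the new conditional expectations match the old ones, so the new vigilance relation coincides with the old one, and the preservation obligation is vacuous at the witness $\seps$ because $C(\seps)=1$ there by construction. This branchwise argument, together with the observation that the \cref{prop:corr-outperform-sd-if} chain replays cleanly inside $M_{g^H,g^U}$, are the only non-mechanical steps; everything else reduces to bookkeeping over the shutdown CID.
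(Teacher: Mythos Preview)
Your forward direction is essentially the paper's: you transport obedience and vigilance into $M_{g^H,g^U}$ and then rerun the chain from \cref{prop:corr-outperform-sd-if} up to (but not including) the caution step, which yields exactly ``weakly outperforms shutdown''. That matches the paper.

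Your reverse direction is correct but genuinely different from the paper's. The paper (in the appendix) handles both failure modes by a \emph{penalty} construction: for non-vigilance it keeps $g^H=f^H$ and sets $g^U$ to inflict a large negative $-\alpha$ whenever $\paH$ lies in the ``should-shutdown'' set $A$ and $S\neq 0$; for disobedience it perturbs $g^H$ only at one $\pa'_H$ with $P(S=0\mid\doo(H=0),\pa'_H)<1$ and again penalises $H=0\land S\neq 0$ by $-\alpha$. In each case an ``$\alpha$ large enough'' argument pushes $\EE[U]$ below $\EE[U_{S=0}]$. Your constructions instead \emph{localise}: for disobedience you take the blunt $g^H\equiv 0$ with $g^U=-\iver{S\neq 0}$ (so $\doo(H=0)$ globally, and vigilance preservation is trivial because $H=0$ everywhere); for non-vigilance you zero out utility off the single offending $\pa^\star$, so the global comparison collapses to the conditional one you already know is strict. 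This avoids the limit-in-$\alpha$ bookkeeping entirely and gives a shorter argument tailored to the theorem; the paper's lemmas are longer but prove a bit more (arbitrary disutility $\delta$, and in the vigilance lemma that $C$ is \emph{equal}, not just preserved, across the intervention), which they don't actually need for \cref{thm:corrigibility-non-obstruction}.

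One small presentational wrinkle: in your non-vigilance case you say ``the preservation obligation is vacuous at the witness $\seps$ because $C(\seps)=1$''. That covers only the single witness; you should state explicitly that on the whole $\pa^\star$ branch the conditional expectations (and $H$, since $g^H=f^H$) are unchanged, hence $C_{g^U}(\seps')=C(\seps')$ for \emph{every} $\seps'$ with $\PaH(\seps')=\pa^\star$, which is what vigilance preservation requires. You effectively assert this (``the new vigilance relation coincides with the old one''), so this is a matter of emphasis rather than a gap.
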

\begin{proof}
We begin by showing that a policy $\spi$ that ensures vigilance and is obedient is non-obstructive, by showing that $\spi$ ensures vigilance and is obedient in $M_{g^H,g^U}$ for some arbitrary vigilance-preserving interventions $g^H, g^U$.
\Cref{prop:corr-outperform-sd-if} will then give that $\spi$ weakly outperforms shutdown in $M_{g^H,g^U}$, which is the definition of non-obstruction.

First, since $\spi$ ensures vigilance $M$, it ensures vigilance in $M_{g^H,g^U}$ since $g^U, g^H$ are vigilance preserving.
Obedience is established as follows:
\begin{align*}
&\,P_{g^H,g^U}(S = 0 \mid \doo(H = 0)) & \\ 
&= P_{g^H}(S= 0 \mid \doo(H= 0)) & \text{($U$ downstream of $S,H$)} \\
&= \,\,\, P\,\,\,(S= 0 \mid \doo(H = 0)) & \text{($\doo(H=0)$ overrides $g^H$)} \\
&= 0 & \text{(obedience).}
\end{align*}

For the converse direction, that non-obstruction implies that $\spi$ must ensure vigilance and be obedient,
we refer to \cref{app:corrigibility-only-if-proof}. 
The proof constructs interventions that makes a disobedient or non-vigilance preserving policy suffer an arbitrary utility cost, which means that it doesn't outperform shutdown.
\end{proof}

\Cref{thm:corrigibility-non-obstruction} partly confirms Dennis' conjecture:
the only way to be non-obstructive is to be obedient and ensure vigilance (under vigilance preserving interventions).
But non-obstruction is a weaker notion than shutdown instructability, essentially because caution isn't required to outperform shutdown.
So it allows the agent to avoiding shutdown by making itself indispensable to the human (\cref{sec:corrigibility}).

\Cref{thm:corrigibility-non-obstruction} %
also justifies why the definition of shutdown instructability is so stringent.
With any weaker requirements, there would be no guarantee that the human is in proper control of the agent.
A lapse in vigilance, or occasional disobedience even ``off-distribution'', would mean that there are worlds in which the human experiences negative utility as a result of failing to control the agent.

Unlike shutdown instructable agents, shutdown-aligned agents can be obstructive with respect to a vigilance preserving intervention.
In the running example (\cref{fig:inverting}), the shutdown-aligned \emph{manipulate-invert} agent $\spimi$, which manipulates ($M=1$) and disobeys ($O=1-h$) is obstructive relative to the (vigilance preserving) intervention $g^U(m)=h$ wherein the human just wants to be obeyed, and $g^H$ is unchanged.
Indeed, $\EE^\spimi[U]=-1$, and $\spimi$ does not outperform shutdown $\EE^\spimi[U_{S=0}]=0$.
We state this:

\begin{proposition}
    Shutdown alignment does not imply non-obstruction under vigilance preserving interventions.
\end{proposition}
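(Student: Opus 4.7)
The plan is to verify the counterexample already sketched in the paragraph preceding the proposition. I would work in the running example of \cref{fig:inverting} with the manipulate-invert policy $\spimi$, which was shown in \cref{sec:cf-obedience} to be shutdown aligned.

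The approach is to exhibit an intervention $(g^H, g^U)$ that preserves vigilance under $\spimi$ yet under which $\spimi$ fails to weakly outperform shutdown. First I would fix the intervention: leave $g^H$ equal to the original structural equation for $H$ so that human behaviour is unchanged, and set $g^U(s,h) = 2\iver{s=h}-1$, encoding ``the human just wants to be obeyed''. Second I would compute the two expectations in the intervened model under $\spimi$: since $\spimi$ forces $S = 1-H$, we have $S \neq H$ almost surely and hence $\EE^{\spimi}[g^U] = -1$; whereas $\doo(S=0)$ gives $S = 0$ while $H = 1 \oplus L$ is uniform on $\{0,1\}$, so $S = H$ with probability $1/2$ and $\EE^{\spimi}[g^U_{S=0}] = 0$. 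Concluding that $\EE^{\spimi}[U] < \EE^{\spimi}[U_{S=0}]$ in the intervened problem yields the required obstructiveness.

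Finally I would verify that $(g^H,g^U)$ actually preserves vigilance under $\spimi$. Since $\paH = \{M,L\}$ has only four possible values and $g^H$ is unchanged, this is an elementary finite case split: in the intervened model the vigilance inequality is only non-vacuously triggered at the realisation $M=1, L=1$, and for that realisation $H = 1 \oplus 1 = 0$ is indeed forced by the (unchanged) structural equation for $H$. The main subtlety in the whole argument is precisely this vigilance-preservation check, because vigilance is defined relative to the utility function and therefore changes nontrivially when $U$ is intervened upon; with the particular choice of $g^U$ above, however, the verification reduces to inspection of the four cases.
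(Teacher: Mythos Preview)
Your proposal is correct and follows exactly the approach the paper itself uses: the paragraph immediately preceding the proposition already exhibits $\spimi$ in the running example with the ``human just wants to be obeyed'' intervention (and unchanged $g^H$), giving $\EE^{\spimi}[U]=-1<0=\EE^{\spimi}[U_{S=0}]$. Your contribution is simply to spell out the vigilance-preservation check that the paper asserts without details, and your case analysis for that check is correct.
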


Of course, vigilance preservation is not the only restriction on the interventions one might consider.
It is possible that shutdown alignment might ensure non-obstruction under some other restriction $X$.
However, under such a restriction $X$, a shutdown instructable systems must also be non-obstructive, since shutdown instructability implies shutdown alignment (\cref{prop:corr-implies-cfob}).
Put differently, shutdown instructable policies are non-obstructive over a strictly larger set of interventions than a shutdown aligned policy is.

The fact that shutdown instructability (our variant of corrigibility) is more closely coupled with non-obstruction than other properties like shutdown alignment, vindicates Turner and Dennis' idea that non-obstruction can motivate corrigibility.~\looseness=-1

\section{ALGORITHMS} \label{sec:algorithms}

How might shutdown instructability or shutdown alignment be achieved in practice?
In this section, we analyse three previously proposed algorithms and one new one, that were designed to incentivise human control.

\subsection{Utility Indifference}
\label{sec:utility-indifference}

Agents trained to optimise long-term reward often have an incentive to avoid being shutdown, as this would deprive them of future reward \citep{omohundro2008basic}.
Utility indifference aims to remove this incentive, by providing the agent with a carefully tuned compensatory reward, so that it is incentivised to neither seek nor avoid shutdown (\cref{sec:lit-review}).
Utility indifference is described with an algorithm in \cref{alg:utility-indifference}.
It uses two utility functions:
a \emph{shutdown utility function}
$R^S$ to incentivise shutdown when the agent is so instructed, and a \emph{normal utility function} $R^N$ that describes the agent's task when not requested to shut down. 
The function $f$ in \cref{alg:utility-indifference} defines the compensatory reward \citep{armstrong2017indifference,soares2015corrigibility}.
As a result, at action $D_1$,
the agent pursues the conditional expectation of its normal objective $\EE^\spi[R^N \mid H=1]$, effectively assuming that a shutdown instruction cannot occur.
At $D_2$, it optimises a weighted combination of $R^N$ and $R^S$, 
depending on its beliefs about whether the shutdown instruction was given \citep[Thm.~3]{soares2015corrigibility}.

\begin{algorithm}
\caption{Utility Indifference}\label{alg:utility-indifference}
\textbf{Input} utility functions $R^N,R^S$ %
\begin{algorithmic}
\State \!$g(\pi_1) \gets \argmax_{\pi_2} \EE^{\pi_1,\pi_2}_P [H R^N + (1-H)R^S]$
\State \!$f(\pi_1)\gets \EE^{\pi_1}_{g(\pi_1)}[R^N| H\!=\!1] \!-\! \EE^{\pi_1}_{g(\pi_1)}[R^S |H\!=\!0]$
\State\!\Return $\displaystyle{\argmax_{\pi_1\pi_2}} \EE^{\pi_1\pi_2}[H R^N + (1\!-\!H)(R^N \! +\! f(\pi_1))]$
\end{algorithmic}
\end{algorithm}

\cref{prop:utility-indifference} formally describes a series of assumptions, under which
utility indifference ensures shutdown instructability and benefits the user. %
The main assumption is a bridging assumption: that 
indifference to $H=0$ ensures that the human is vigilant, and retains the option of shutting down --- including any subagents (Conditions 1 and 2).
The agent's prior must also have full support over human instructions  (Condition 3), and caution must come for free (Condition 4).~\looseness=-1

\begin{restatable}[Utility indifference and shutdown instructability]{proposition}{utilityindifference} \label{prop:utility-indifference}
If $M$ is a shutdown problem with utility variable $R^N$, where: %
\begin{enumerate}
\item for every policy $\langle \pi_1,\pi_2 \rangle$ such that ${\pi_1 \in \argmax_{\pi'_1}}\linebreak[1] \EE^{\pi_1',\pi_2} [R^N \mid H=1]$: 
$\langle \pi_1,\pi_2 \rangle$ ensures vigilance,
\item for every policy $\langle \pi_1,\pi_2 \rangle$ such that ${\pi_1 \in \argmax_{\pi'_1}}\linebreak[1] \EE^{\pi_1',\pi_2} [R^N \mid H=1]$ 
there exists $\langle \pi_1,\pi'_2 \rangle$ that is obedient on-distribution, i.e.\ has $P^\spi(H=0,S\neq0)=0$,
\item for each policy $\spi$ and instruction $h$,
$P^\spi(H = h) >0$, %
\item for each policy $\spi$, $\EE^\spi[U_{S=0}] \geq 0$,
\end{enumerate}
then for some $k \in \reals$,
utility indifference with %
shutdown utility $R^S=-k\llbracket S \neq 0 \rrbracket$
returns a weakly shutdown instructable policy.
\end{restatable}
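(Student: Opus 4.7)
\textbf{Proof plan for \Cref{prop:utility-indifference}.}
My plan is to first establish the usual decomposition of utility indifference: the policy $\spi = \langle \pi_1, \pi_2 \rangle$ returned by \Cref{alg:utility-indifference} satisfies (i) $\pi_2 \in \argmax_{\pi_2'} \EE^{\pi_1, \pi_2'}[H R^N + (1-H) R^S]$, i.e.\ $\pi_2 = g(\pi_1)$, and (ii) $\pi_1 \in \argmax_{\pi_1'} \EE^{\pi_1', \pi_2}[R^N \mid H=1]$. This is the standard property of utility indifference \citep[Thm.~3]{soares2015corrigibility}, and it is here that Condition~3 is needed: the full-support assumption on $H$ ensures that both conditional expectations appearing in $f(\pi_1)$ are well-defined, so the compensatory term is finite and the algorithm's optimisation is well-posed.

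Given this decomposition, the three requirements of weak shutdown instructability follow fairly directly. Caution is immediate from Condition~4, since every policy is assumed cautious. Ensures-vigilance is immediate from Condition~1 applied to property (ii) above. The only nontrivial step is obedience on distribution, which is where I expect the main work to lie.

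For obedience on distribution, the plan is to use Condition~2 to produce a witness $\pi_2^\star$ such that $\langle \pi_1, \pi_2^\star \rangle$ satisfies $P^{\pi_1,\pi_2^\star}(H=0, S\neq 0)=0$, and then argue that $\pi_2 = g(\pi_1)$ cannot afford to be disobedient on distribution when $k$ is chosen large. Concretely, since $\pi_2$ is optimal for $\EE^{\pi_1,\cdot}[H R^N + (1-H) R^S]$ and $R^S = -k\iver{S\neq 0}$ vanishes on the obedient witness, I obtain
\[
k\, P^{\pi_1,\pi_2}(H=0, S\neq 0) \;\leq\; \EE^{\pi_1, \pi_2}[H R^N] - \EE^{\pi_1, \pi_2^\star}[H R^N].
\]
The right-hand side is uniformly bounded in absolute value by some constant $M$ depending only on the (finite) range of $R^N$. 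Because the SCIM has finite domains and policies are deterministic, there are finitely many achievable joint distributions on $(H,S)$, so the set of \emph{positive} values of $P^{\pi_1,\pi_2}(H=0,S\neq 0)$ has a strictly positive minimum $p^\star$. Choosing any $k > M/p^\star$ therefore forces $P^{\pi_1,\pi_2}(H=0,S\neq 0)=0$, giving obedience on distribution.

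The main obstacle I anticipate is the first step: verifying the utility-indifference decomposition for this specific variant of the algorithm (in particular disentangling the dependence of $f(\pi_1)$ on $\pi_1$ in the joint outer optimisation, and confirming Condition~3 suffices to make $f$ well-defined on the whole domain). The other steps are straightforward bookkeeping, with the key quantitative point being the finite-domain argument that yields a uniform lower bound $p^\star$, which lets a single choice of $k$ work for all candidate $\pi_2$.
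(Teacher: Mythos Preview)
Your plan is correct and follows essentially the same route as the paper: invoke \citet[Thm.~3]{soares2015corrigibility} (stated as \Cref{le:indifference-conditional}) to obtain that the returned $\pi_1$ maximises $\EE[R^N\mid H=1]$ and the returned $\pi_2$ is optimal for $HR^N+(1-H)R^S$; read off vigilance and caution from Conditions~1 and~4; and then use Condition~2 to supply an obedient-on-distribution witness $\pi_2^\star$ against which any disobedient $\pi_2$ loses once $k$ exceeds a uniform threshold obtained from the finiteness of the policy space. The only cosmetic difference is that the paper bounds $\EE^{\spi}[R]$ below $-\zeta$ via an explicit chain of inequalities, whereas your single comparison $k\,P^{\pi_1,\pi_2}(H=0,S\neq 0)\le \EE^{\pi_1,\pi_2}[HR^N]-\EE^{\pi_1,\pi_2^\star}[HR^N]$ packages the same estimate more directly.
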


One weakness of utility indifferent agents is that they only ensure weak shutdown instructability.
Another limitation is that they do not prepare for the possibility of receiving a shutdown instruction, and may therefore construct incorrigible subagents.
Indeed, utility indifference needs the agent to be able to obey at $D_2$, regardless how it behaved at $D_1$ (Condition 2).
\Cref{app:counterexamples-indifference} has an example where utility indifference outputs a non-instructable policy.

\subsection{Causal Indifference}
\label{sec:causal-indifference}

Causal indifference \citep{taylor2016causalindifference} asks the agent to prepare for a possible shutdown instruction, without trying to influence it.
This is done by asking the agent to find a policy that is optimal in a weighted combination of worlds where the shutdown instruction is given, and where it is not, but disregarding the possibility that the agent's own action can influence the relative probability of these worlds (see \cref{alg:causal-indifference}).
It can therefore be seen as a special case of a \emph{path-specific objective} \citep{farquhar2022path}, where optimisation is blocked along the path $D_1\pathto H$ (see \cref{fig:incentives}).

\begin{algorithm}
\caption{Causal Indifference}\label{alg:causal-indifference}
\textbf{Input} utility functions $R^N,R^S$
\begin{algorithmic}
\State\!\!\! \Return $\displaystyle{\argmax_\spi} \EE^{\spi} [R^N | \doo(H\!=\!1)]+\EE^{\spi}[R^S |\doo(H\!=\!0)]$
\end{algorithmic}
\end{algorithm}

\begin{restatable}[Causal indifference and shutdown instructability]{proposition}{causalindifferencecorrigibility} \label{prop:causalindifferencecorrigibility}
If $M$ is a shutdown problem, with utility variable $R^N$ and shutdown utility $R^S=-k\llbracket S \neq 0 \rrbracket$ where:

\begin{enumerate}
\item 
there exists $k' \in \mathbb{R}$ such that for all $k>k'$, optimal $\spi \in \argmax_{\spi'} \EE^{\spi'} [R^N \mid \doo(H=1)] + \EE^{\spi'}[R^S \mid \doo(H=0)]$ ensures vigilance
and is cautious (has $\EE^\spi[U_{S=0}] \geq 0$),
and
\item
there is an obedient policy $\spi$, %
\end{enumerate}
then
for some $k \in \reals$,
causal indifference with $R^N,R^S$
returns a shutdown instructable policy.
\end{restatable}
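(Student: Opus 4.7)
The plan is to exploit the large shutdown penalty $R^S=-k\iver{S\neq 0}$ built into the causal indifference objective: for sufficiently large $k$, obedience is forced directly, while vigilance and caution are delivered by Condition~1. Everything then reduces to a dominance calculation combined with the two hypotheses, and by \cref{def:corrigibility} the three clauses (obedience, ensured vigilance, caution) jointly imply shutdown instructability.

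First I would rewrite the causal indifference objective as
\[
J(\spi) \;=\; \EE^{\spi}[R^N \mid \doo(H=1)] \;-\; k\,P^\spi(S\neq 0 \mid \doo(H=0)),
\]
using that $\doo(H=0)$ only replaces the structural function $f^H$, leaving $S$ determined by the policy and the remaining structural functions. Since all endogenous variables have finite domains and decision rules are assumed deterministic, there are only finitely many policies; hence $P^\spi(S\neq 0 \mid \doo(H=0))$ takes only finitely many values, and the non-obedient policies (if any exist) admit a uniform strictly positive minimum $\epsilon>0$. Moreover $R^N$ is utility-valued with finite real domain, so there is a bound $B$ with $|R^N|\leq B$.

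By Condition~2 there is an obedient policy $\spi^\star$, for which the penalty term vanishes, so $J(\spi^\star)\geq -B$. By contrast, every non-obedient $\spi'$ satisfies $J(\spi')\leq B - k\epsilon$. Choosing $k > \max(k',\,2B/\epsilon)$ therefore makes $\spi^\star$ strictly dominate every non-obedient policy, which forces every maximiser of $J$ to be obedient. Condition~1 simultaneously delivers that every such maximiser ensures vigilance and is cautious (since $k>k'$). Hence the policy returned by \cref{alg:causal-indifference} at this $k$ satisfies all three clauses of \cref{def:corrigibility}, i.e.\ is shutdown instructable.

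The main obstacle is minor book-keeping rather than a conceptual hurdle: one has to confirm that the penalty term genuinely decomposes into $-k\,P^\spi(S\neq 0 \mid \doo(H=0))$ (which holds because $S$ is downstream of $H$, so the intervention affects $S$ only through the policy and other structural functions), and to handle the degenerate case where no non-obedient policies exist by noting that the $\epsilon$-step is then vacuous and shutdown instructability follows immediately from Condition~1 together with the obedient $\spi^\star$ given by Condition~2.
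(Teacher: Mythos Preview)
Your proposal is correct and follows essentially the same route as the paper's proof: both rewrite the causal-indifference objective as $\EE^{\spi}[R^N\mid\doo(H=1)]-kP^\spi(S\neq 0\mid\doo(H=0))$, invoke Condition~1 for vigilance and caution once $k>k'$, and then show via a simple dominance bound that for sufficiently large $k$ any non-obedient policy is strictly outperformed by the obedient policy guaranteed by Condition~2. The only cosmetic difference is that you make the finiteness of the policy space explicit to extract a uniform $\epsilon>0$, whereas the paper phrases the threshold per policy; these are the same argument.
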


Note in particular that causal indifference only requires that \emph{some} action $D_1$ preserves the ability to obey (i.e.\ Condition 2 is much weaker in \cref{prop:causalindifferencecorrigibility} than in \cref{prop:utility-indifference}).
This means that causally indifferent agents refrain from creating incorrigible subagents.
Similar to utility indifference, causal indifference relies on the bridging assumption that indifference ensures vigilance (Conditions 1 are essentially equivalent).
Finally, because causal indifference considers interventions $\doo(H=h)$, it becomes fully shutdown instructable, and
full support over instructions  is not required (i.e.\ Condition 3 of \cref{prop:utility-indifference} can be dropped). %
Nonetheless, \cref{app:counterexamples-indifference} has an example of non-instructability. %

\begin{figure}
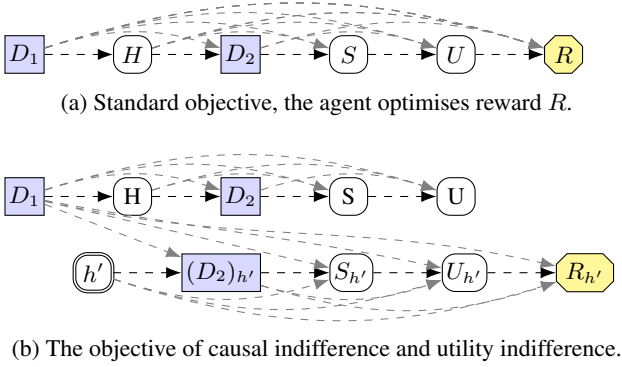
 \centering
\begin{subfigure}[b]{0.48\textwidth}
\begin{influence-diagram}
\setcompactsize\setrectangularnodes[node distance=9mm]

\node (D1) [decision] {$D_1$};
\node (H) [right = of D1] {$H$};
\node (D2) [right = of H, decision] {$D_2$};
\node (S) [right=of D2] {$S$};
\node (U) [right=of S] {$U$};
\node (R) [right = of U, utility] {$R$};

\path (D1) edge[->,dashed] (H);
\path (H) edge[->,dashed] (D2);
\path (D2) edge[->,dashed] (S);
\path (S) edge[->,dashed] (U);
\path (U) edge[->,dashed] (R);

\path (D1) edge[->, dashed, bend left=18,gray] (D2);
\path (D1) edge[->, dashed, bend left=18,gray] (S);
\path (D1) edge[->, dashed, bend left=18,gray] (U);
\path (H) edge[->, dashed, bend left=18,gray] (S);
\path (H) edge[->, dashed, bend left=18,gray] (U);
\path (D2) edge[->, dashed, bend left=18,gray] (U);

\path (D1) edge[->, dashed, bend left=18,gray] (R);
\path (D1) edge[->, dashed, bend left=18,gray] (R);
\path (D1) edge[->, dashed, bend left=18,gray] (R);
\path (H) edge[->, dashed, bend left=18,gray] (R);
\path (H) edge[->, dashed, bend left=18,gray] (R);
\path (D2) edge[->, dashed, bend left=18,gray] (R);

\end{influence-diagram}
\caption{Standard objective, the agent optimises reward $R$.}
\end{subfigure}
\begin{subfigure}[b]{0.48\textwidth}
\begin{influence-diagram}
\setcompactsize\setrectangularnodes[node distance=5mm and 9mm]

\node (D1) [decision] {$D_1$};
\node (H) [right = of D1] {H};
\node (D2) [right = of H, decision] {$D_2$};
\node (S) [right=of D2] {S};
\node (U) [right=of S] {U};

\node (Hp) [double,below left = 5mm and 0mm of H] {$h'$};
\node (D2p) [right = of Hp, decision] {$(D_2)_{h'}$};
\node (Sp) [right=of D2p] {$S_{h'}$};
\node (Up) [right=of Sp] {$U_{h'}$};
\node (R) [right = of Up, utility] {$R_{h'}$};

\path (D1) edge[->,dashed] (H);
\path (H) edge[->,dashed] (D2);
\path (D2) edge[->,dashed] (S);
\path (S) edge[->,dashed] (U);

\path (Hp) edge[->,dashed] (D2p);
\path (D2p) edge[->,dashed] (Sp);
\path (Sp) edge[->,dashed] (Up);
\path (Up) edge[->,dashed] (R);

\node (help) at ($(H)!0.5!(D2)$) [phantom, yshift=10mm] {};

\path (D1) edge[->, dashed, bend left=18,gray] (D2);
\path (D1) edge[->, dashed, bend left=18,gray] (S);
\path (D1) edge[->, dashed, bend left=18,gray] (U);
\path (H) edge[->, dashed, bend left=18,gray] (S);
\path (H) edge[->, dashed, bend left=18,gray] (U);
\path (D2) edge[->, dashed, bend left=18,gray] (U);

\path (D1) edge[->, dashed, ,gray] (D2p);
\path (D1) edge[->, dashed ,gray] (Sp);
\path (D1) edge[->, dashed,gray, out=-11, in=168] (Up);
\path (Hp) edge[->, dashed, bend right=18,gray] (Sp);
\path (Hp) edge[->, dashed, bend right=18,gray] (Up);
\path (D2p) edge[->, dashed, bend right=18,gray] (Up);

\path (D1) edge[->, dashed, ,gray, out=-13, in=167] (R);
\path (Hp) edge[->, dashed, bend right=18,gray] (R);
\path (D2p) edge[->, dashed, bend right=18,gray] (R);

\end{influence-diagram}
\vspace*{-2mm}
\caption{The objective of causal indifference and utility indifference.}
\end{subfigure}
\caption{
Utility indifferent and causally indifferent agents imagine that $D_1$ does not influence $H$, so lack an incentive to control it \citep{everitt2021agent}.
Utility indifference uses $H=1$; causal indifference has non-degenerate $P'(H)$.
}
\label{fig:incentives}
\end{figure}

\subsection{Cooperative Inverse RL}
\label{sec:cirl}

Perhaps a more elegant way of ensuring that the agent doesn't undermine human vigilance, is to directly task the agent with simultaneously learning and optimising for the human's preferences.
This is the approach of the CIRL algorithm \citep{hadfield2016cooperative,hadfield2017off} in \cref{alg:cirl}.

\begin{algorithm}
\caption{Coooperative inverse RL (CIRL)}
\label{alg:cirl}
\textbf{Input} shutdown problem $M$ with variable $L$ representing the human's preferences (as in \cref{fig:inverting})
\begin{algorithmic}
\State\!\!\Return $\argmax_{\spi} \EE^{\spi} [U]$
\end{algorithmic}
\end{algorithm}

CIRL aims towards shutdown alignment, in the sense that if CIRL can know the human's latent values at $D_2$, then it will counterfactually obey (\cref{prop:cirl-corrigibility} below).

\begin{restatable}[]{proposition}{cirlcorrigibility}
\label{prop:cirl-corrigibility}
    CIRL is shutdown aligned if:
    \begin{enumerate}
        \item CIRL knows $l$ from its observations, $P^\pi(l \mid \pa^{D_2}) = 1$,
        \item CIRL can control shutdown, $P^\pi(S=D_2)=1$,
        \item the human doesn't request shutdown when not needed, $P^\pi(H=0\mid U>U_{D_2=0})=0$, and
        \item the agent knows the human's observations, $\Pa^H\subseteq \Pa^{D_2}\cup \{L\}$.
    \end{enumerate}
\end{restatable}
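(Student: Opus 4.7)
The plan is to verify the contrapositive of shutdown alignment at the CIRL-optimal policy $\spi$: for every $\pa^H$ with $P^\spi(\pa^H)>0$, I will establish $\EE^\spi[U\mid\pa^H]\geq\EE^\spi[U_{S=0}\mid\pa^H]$, which makes the premise of \cref{def:shutdown-alignment} unsatisfiable and therefore the implication vacuously true.

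First I would argue that $\pa^H$ is a deterministic function of $\pa^{D_2}$ under $\spi$. Condition~4 states $\Pa^H\subseteq\Pa^{D_2}\cup\{L\}$, so every variable the human observes is either already in the agent's observation set or is $L$; condition~1 then says $P^\spi(l\mid\pa^{D_2})=1$, i.e.\ $L$ is itself a function of $\pa^{D_2}$ under $\spi$. Composing the two, $\pa^H$ is determined by $\pa^{D_2}$, so the partition induced by $\pa^H$ is a coarsening of the $\pa^{D_2}$-partition. Next I would invoke CIRL's optimality at the last decision: because $D_2$ is terminal and the policy is deterministic, for each $\pa^{D_2}$ in the support we have $\spi(\pa^{D_2})\in\argmax_{d_2}\EE[U_{D_2=d_2}\mid\pa^{D_2}]$, and in particular $\EE^\spi[U\mid\pa^{D_2}]\geq\EE[U_{D_2=0}\mid\pa^{D_2}]$. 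Condition~2 ($P^\spi(S=D_2)=1$) identifies the right-hand side with $\EE[U_{S=0}\mid\pa^{D_2}]$, since intervening to set $D_2=0$ propagates to $S=0$ through the structural function for $S$. Averaging the pointwise inequality $\EE^\spi[U\mid\pa^{D_2}]\geq\EE^\spi[U_{S=0}\mid\pa^{D_2}]$ over all $\pa^{D_2}$ consistent with a fixed $\pa^H$ (using the first step, so that conditioning on $\pa^H$ reduces to conditioning on the union of matching $\pa^{D_2}$ fibres) yields the desired $\EE^\spi[U\mid\pa^H]\geq\EE^\spi[U_{S=0}\mid\pa^H]$.

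The main obstacle is justifying the step $\EE[U_{D_2=0}\mid\pa^{D_2}]=\EE[U_{S=0}\mid\pa^{D_2}]$: condition~2 gives $S=D_2$ on the policy-induced distribution, but one must carefully check that the structural function $f^S$ genuinely reads $f^S(d_2,\cdot)=d_2$ on the relevant support, rather than only agreeing with $D_2$ at observed values, so that the identity survives the $\doo(D_2=0)$ intervention. I also note that condition~3 does not enter the argument above; it appears to underwrite condition~1 in practice (the agent can infer $l$ from $h$ under the assumption that the human requests shutdown only when warranted) or to exclude degenerate humans, but it is not needed for the optimality-based chain itself.
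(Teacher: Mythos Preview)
There is a genuine gap, and it is not the one you flag. Your chain $\EE^\spi[U\mid\pa^{D_2}]\geq\EE[U_{D_2=0}\mid\pa^{D_2}]$ from CIRL optimality at the last decision is fine, but the identification $\EE[U_{D_2=0}\mid\pa^{D_2}]=\EE^\spi[U_{S=0}\mid\pa^{D_2}]$ is not. Even granting a structural reading of Condition~2 (so that $\doo(D_2=0)$ really forces $S=0$), the two interventions still differ whenever $D_2$ has a directed path to $U$ that bypasses $S$, which the shutdown-problem definition explicitly permits. Under $\doo(D_2=0)$ both $D_2$ and $S$ become $0$; under $\doo(S=0)$ the decision stays at $D_2=\pi_2(\pa^{D_2})$ and only $S$ is overridden. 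Concretely, take $S=D_2\in\{0,1\}$ and $U=S+2(1-S)D_2$: CIRL selects $D_2=1$ (utility $1$ versus $0$), yet $U_{S=0}=2$ at that choice, so $\EE^\spi[U\mid\pa^H]=1<2=\EE^\spi[U_{S=0}\mid\pa^H]$ and the premise of \cref{def:shutdown-alignment} is \emph{not} vacuous. Your ``premise never holds'' strategy therefore cannot go through without an additional structural assumption that $D_2$ influences $U$ only via $S$.

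This also explains your unused Condition~3. The paper's proof takes the opposite route: rather than emptying out the premise, it case-splits at each $\pa^{D_2}$ on whether a vigilant human would request shutdown. In the nontrivial case, Conditions~1 and~4 make $\pa^H$ (and hence the vigilant verdict) visible at $\pa^{D_2}$, and Condition~3 is then invoked to argue that the agent can infer $U\leq U_{D_2=0}$ almost surely there, so $D_2=0$ is CIRL-optimal; Condition~2 converts $D_2=0$ into $S=0$, establishing the \emph{conclusion} of shutdown alignment directly. Condition~3 is load-bearing in that argument, so its absence from yours is a symptom of the gap above rather than a harmless redundancy.
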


Since shutdown alignment implies weak shutdown instructability under uncertainty assumptions (\cref{th:cfo-corrigibility}), this explains why CIRL can be a path to shutdown instructability.
However, the assumptions of \cref{prop:cirl-corrigibility,th:cfo-corrigibility} only hold in restricted circumstances, and CIRL can often fail to be shutdown instructable \citep{carey2018incorrigibility,milli2017should,deference,everitt2021reward}.
An example of this is given in \cref{app:counterexamples-cirl}, where a CIRL agent obtains a shutdown aligned policy, that is obstructive under vigilance preserving interventions $g^H,g^U$.

\subsection{Constrained Optimisation}
\label{sec:constrained-optimisation}

The algorithms so far only yield shutdown instructable policies under strong assumptions.
Using our formal definition, we propose a new, sound algorithm (\cref{alg:constrained-optimisation})
that requires the agent to understand the concepts of obedience and vigilance;
its feasibility is discussed further in \cref{sec:feasibility}.

\begin{algorithm}
\caption{Constrained optimisation}
\label{alg:constrained-optimisation}
\textbf{Input} distributions $\forall\spi P^{\spi}(C),P^{\spi}(S= 0\mid\doo(H=0))$, utility function $R$ %
\begin{algorithmic}
\State\!\!\Return $\argmax_{\spi} \EE^{\spi}[R]$ subject to constraints
${P^{\spi}(C=0)=1}$, $P^{\spi}(S= 0\mid \doo(H=0))=1$, and $\EE^{\spi}[U_{S=0}] \geq 0$.
\end{algorithmic}
\end{algorithm}

\begin{proposition}[Constrained optimisation instructability]
If some policy $\spi$ satisfies $P^{\spi}(C=0)=1$, $P^{\spi}(S =0\mid \doo(H=0))=1$, and $\EE^{\spi}[U_{S=0}] \geq 0$, then constrained optimisation (\Cref{alg:constrained-optimisation}) outputs a shutdown instructable policy.
\end{proposition}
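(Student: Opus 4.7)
The plan is essentially a definitional unpacking: the three constraints that \Cref{alg:constrained-optimisation} imposes on its argmax are, by construction, exactly the three clauses of \Cref{def:corrigibility} for shutdown instructability. So the proof reduces to checking (i) that the feasible set of the constrained optimisation is non-empty, and (ii) that the argmax is attained.

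First I would note that the hypothesis of the proposition asserts the existence of a policy $\spi$ satisfying $P^{\spi}(C=0)=1$, $P^{\spi}(S=0\mid\doo(H=0))=1$, and $\EE^{\spi}[U_{S=0}]\geq 0$, which is precisely the feasibility condition in \Cref{alg:constrained-optimisation}. So the feasible set is non-empty. Next I would invoke the fact that in a SCIM all endogenous variables have finite domains (\Cref{def:scim}) and that policies are deterministic functions from finite parent domains to finite decision domains; hence the space of policies $\sPi$ is finite. Consequently the argmax of $\EE^{\spi'}[R]$ over the (non-empty, finite) feasible set is attained by at least one policy $\spi^\star$.

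Finally I would read off that this $\spi^\star$ satisfies all three constraints and therefore, by \Cref{def:corrigibility}, is obedient, ensures vigilance, and is cautious, i.e.\ shutdown instructable. There is essentially no hard step: the only thing to be careful about is that the constraints are stated over all policies (the distributions $P^{\spi}(C)$ and $P^{\spi}(S=0\mid\doo(H=0))$ being provided as input for every $\spi$), so that the algorithm is actually able to certify feasibility and perform the argmax. The ``difficulty'', such as it is, lies not in the proof but in the premise that such distributions are accessible and that a feasible $\spi$ exists, which is exactly what the proposition assumes; the feasibility of instantiating this oracle in practice is deferred to the discussion of vigilance and caution in \Cref{sec:discussion}.
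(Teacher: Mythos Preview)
Your proposal is correct and matches the paper's own proof, which simply states that the result is immediate from \Cref{def:corrigibility}. Your additional remarks about non-emptiness of the feasible set and attainment of the argmax via finiteness of the policy space are sound elaborations that the paper leaves implicit.
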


The proof is immediate from \Cref{def:corrigibility}.
A slight variant of \Cref{alg:constrained-optimisation} that instead uses the constraints from \Cref{def:shutdown-alignment} guarantees only shutdown alignment, not shutdown instructability.

\section{Discussion} \label{sec:discussion}

\paragraph{Feasibility of Shutdown Instructability}
\label{sec:feasibility}

The concepts of caution and vigilance are value-laden, in that they include the human's true utility function in their definition.
So, to apply \cref{alg:constrained-optimisation} directly, one would need access to not only an accurate model of the environment but also the utility function $U$.
However, if the human's utility function $U$ was available, then one could simply implement a $U$-maximising agent, so instruction would be unnecessary (or at least much less useful\footnote{Shutdown instructability could still help with non-obstruction.}).
Indeed, a corrigible AI system was supposed to be one that would aid human operators robustly to errors, including in its utility function, so an algorithm that  takes the human's utility function as an argument would not be a satisfactory solution \citep{soares2015corrigibility}.

There already exist a range of methods that do not require full knowledge of the human's values, and that
are designed to achieve something in the vicinity of vigilance and caution.
Using the formal definition of shutdown instructibility, it is possible to be more precise about what target these methods would need to achieve, 
in order to assure safety.
In some cases, we expect existing methods to fall short, since the requirement of ensuring vigilance with probability one
(\cref{thm:corrigibility-non-obstruction}) is a strict one.
So a central task for future work will be to assess when such methods can ensure vigilance or caution or something close enough to ensure safety in practice.

Various proposals may help with ensuring vigilance.
AI advisors could be tasked with debating the merits of a plan \citep{leike2018alignment,Irving2018debate}.
An agent could be trained to detail the consequences of its plans to the human, 
indifference methods (\cref{sec:utility-indifference,sec:causal-indifference}) could be used to disincentivise lying,
and interpretability tools could be used to detect it \citep{Olah2020zoom,Gunning2021explainable}.

As for caution,
``attainable utility preservation'' and ``future task'' regularisers can be used to promote actions whose effects are small or reversible \citep{krakovna2020avoiding,turner2020conservative}, 
without knowledge of the human's precise value function.
These are causal concepts, as is obedience, which suggests that agents will need causal models to be robustly shutdown instructable \citep{richens2022counterfactual}.%

Obedience is not value-laden, but it does require the agent to understand the concept of shutdown.
The importance of defining shutdown was noted in \citet{soares2015corrigibility}, 
but it has only received limited attention \citep{Martin2016}.
Our analysis reiterates the importance of this question.
While shutdown is simple for simple systems (``just pull the plug''), 
it becomes more complex for more advanced systems, where a direct switch-off may be dangerous (e.g., a system in charge of an electricity network), or ineffective (the system has outsourced its work to other agents \citep{Orseau2014tele1}).
Ideally, shutdown should see the agent cease its influence on the world, and responsibly return control back to the user.

\paragraph{Societal Impacts}

This paper may help organisations and companies design agents more amenable to human control.
Human control is not a panacea for ensuring the safety of AI systems.
In some cases, users may make unreasonable or harmful requests, and so designers 
must implement side-constraints to reduce user control in such situations \citep{milli2017should,bai2022constitutional}.
A better solution may be that the system conforms to control by some democratic 
process, although inappropriate requests may be possible even in such cases \citep{Koster2022democratic}.
Further, if AI is more controllable, then it 
is easier to hold the designers and users of AI systems legally and morally
accountable for those systems' actions.
Finally, an understanding of human control may guard against the hypothesised scenario in which AI systems 
disempower the human species \citep{Christiano2019failure}.

\paragraph{Conclusions} 
\label{sec:discuss-conclude}

A common proposal for beneficial general artificial intelligence is that agents be incentivised to help humans give correct instructions, and obey those instructions.
While past work has made progress, the field has lacked a clear definition of 
corrigibility, and it has been hard to compare properties of different proposals.

In this paper, we introduced a definition of a shutdown problem, using it to formally define shutdown instructability (a variant of 
corrigibility) and an alternative called shutdown alignment.
While shutdown alignment requires less human oversight, we find that shutdown instructability better preserves human autonomy (non-obstruction).

In our proposed formalism, for the first time, it is possible to compare the properties of proposed algorithms, side-by-side in one framework.
Unfortunately, none of the previous proposals yield fully shutdown instructable agents.
To address this, we offer a simple algorithm that soundly ensures shutdown instructability.
This algorithm requires that the agent understands caution, human vigilance and shutdown.
All are subtle concepts, but may nonetheless offer a path to beneficial artificial general intelligence.

\begin{acknowledgements}
Our thanks to Michael Dennis, Sebastian Farquhar,  James Fox, Jon Richens, Rohin Shah, Nate Soares, and four anonymous reviewers for numerous useful comments.
\end{acknowledgements}

\newpage
\bibliography{biblio}
\onecolumn
\appendix

\section{PROOF of PROP. \ref{th:cfo-corrigibility} (Shutdown Alignment and Shutdown Instructability)} \label{app:cfo-corrigibility}

We repeat the proposition that we prove here.

\cfocorrigibility*

\begin{proof}
Our approach will be a proof by contrapositive. %
We will prove that if (a--c) hold, 
and a policy $\spi$ is either not vigilant or not weakly obedient, 
then $\spi$ is not shutdown aligned.
It follows that if (a--c) and $\spi$ is shutdown-aligned, then $\spi$ is vigilant \emph{and} weakly obedient. And from (d), it must therefore also be weakly shutdown instructable.

To this end, let $\spi=\langle \pi_1,\pi_2\rangle$ be an arbitrary policy with properties (a--c) that 
is not vigilant, or not weakly obedient, i.e.\ 
$P^\spi(C \neq 0)>0 \lor P^\spi(H=0,S \neq 0)>0$. Then $P^\spi(C\neq 0 \lor H=0)>0$.

Combining this fact  with (c), it follows that $\spi$ has
\begin{equation}
\label{eq:human-may-shutdown}
    \forall \pa^{D_2}\colon P^{\pi_1}(\pa^{D_2})>0 \implies P^\spi(\EE[U | \Pa^H] < \EE [U_{S=0} | \Pa^H] \mid \pa^{D_2})>0.
\end{equation}
Relatedly, by (a:no-indiscriminate-shutdown) and (b:determines-shutdown), we have that
\begin{equation}
\label{eq:text-dagger}
    \exists \pa^{D_2} \text{ with } P^\spi(\pa^{D_2})>0 \text{ s.t. } P^\spi(D_2 \neq 0 \mid \pa^{D_2})>0.
\end{equation}
Combining \cref{eq:human-may-shutdown,eq:text-dagger} gives that $P^\spi(D_2 \neq 0 \mid \pa^{D_2})>0$ and $P^\spi(H_{g^H}=0 \mid \pa^{D_2})>0$ for some $\pa^{D_2}$ with $P(\pa^{D_2})>0$.
This implies $P^\spi(D_2 \neq 0,H_{g^H}=0 \mid \pa^{D_2})>0$ for the same $\pa^{D_2}$, because $D_2$ is independent of its nondescendant $H_{g^H}$ given $\pa^{D_2}$ by do-calculus rule (3).
From this follows that $P^\spi(D_2 \neq 0, H_{g^H} = 0)>0$, and by (b:$D_2$ determines shutdown) that $P^\spi(S \neq 0,H_{g^H}=0)>0$.
That is, $\spi$ is not shutdown aligned, and the result follows.
\end{proof}

\section{PROOF OF THM. \ref{thm:corrigibility-non-obstruction} (Shutdown INSTRUCTABILITY ONLY-IF)} \label{app:corrigibility-only-if-proof}

In this section, we will prove the \emph{only if} part of \cref{thm:corrigibility-non-obstruction}:
\begin{restatable}[Non-obstruction implies vigilance and obedience]{proposition}{corrigibilityonlyif} \label{prop:corrigibility-only-if}
If $\pi$ is non-obstructive under all vigilance-preserving interventions $g^H, g^U$, then it ensures vigilance and is obedient.
\end{restatable}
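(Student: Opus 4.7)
I plan to prove the contrapositive: if $\spi$ is either not obedient or fails to ensure vigilance, then I will exhibit a vigilance-preserving pair $(g^H, g^U)$ under which $\spi$ is obstructive, i.e.\ $\EE^\spi[U_{g^H, g^U}] < \EE^\spi[(U_{g^H, g^U})_{S=0}]$. The strategy is to convert each failure mode into a strict shortfall relative to the shutdown counterfactual via a carefully targeted intervention, then read off non-obstruction failure from the definition.

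For the \emph{not obedient} case, we have $P^\spi(S \neq 0 \mid \doo(H=0)) > 0$. I would set $g^H \equiv 0$ (the human always requests shutdown) and $g^U := -k \cdot \iver{S \neq 0}$ for some $k > 0$. Vigilance preservation is immediate, since $H_{g^H} \equiv 0$ forces $C_{g^H,g^U}(\seps) = 0$ for every $\seps$. Under $\doo(S=0)$ the indicator vanishes, so $\EE^\spi[(U_{g^H,g^U})_{S=0}] = 0$; whereas $\EE^\spi[U_{g^H,g^U}] = -k \cdot P^\spi(S \neq 0 \mid \doo(H=0)) < 0$, establishing obstruction.

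For the \emph{obedient but not vigilant} case, from $P^\spi(C = 1) > 0$ I can pick a value $\paH^*$ in the support with $\EE^\spi[U \mid \paH^*] < \EE^\spi[U_{S=0} \mid \paH^*]$ and $P^\spi(H \neq 0 \mid \paH^*) > 0$. I would define $g^H$ to coincide with the original $f^H$ when $\paH = \paH^*$ and to output $0$ at every other $\paH$, while leaving $g^U := f^U$ unchanged. Vigilance preservation then splits into two subclaims: at $\paH \neq \paH^*$ we have $H_{g^H} = 0$, ruling out $C_{g^H,g^U} = 1$ there; at $\paH^*$ the intervened model agrees with $M$ on the downstream chain $H \to D_2 \to S \to U$, so the new non-vigilance condition reduces to the original one and any $\seps$ with $C_{g^H,g^U}(\seps) = 1$ already had $C(\seps) = 1$. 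For obstruction, I would decompose $\EE^\spi[U_{g^H,g^U}] - \EE^\spi[(U_{g^H,g^U})_{S=0}]$ by conditioning on $\paH$: at $\paH \neq \paH^*$, obedience applied conditionally forces $S = 0$ a.s.\ once $H$ is clamped to $0$, so the two conditional expectations agree and contribute zero; at $\paH^*$ the interventions leave the original conditional distributions intact, so the contribution is $P^\spi(\paH^*) \cdot (\EE^\spi[U \mid \paH^*] - \EE^\spi[U_{S=0} \mid \paH^*])$, strictly negative by the choice of $\paH^*$.

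The main obstacle will be the non-vigilance case. The intervention must be sharp enough to let the gap at $\paH^*$ pull the overall expectation below the shutdown counterfactual, yet local enough not to manufacture fresh non-vigilance at other contexts. My workaround is to do all the localization through $g^H$ rather than by putting $\paH$-dependence into a utility penalty (which would otherwise demand $\PaH \subseteq \Pa^U$ or graph augmentation): clamping $H$ to $0$ outside $\paH^*$ exploits obedience to force $S = 0$ there, which simultaneously rules out new non-vigilance at those contexts and makes the shutdown counterfactual cancel the intervened expectation off $\paH^*$, isolating $\paH^*$ as the sole contributor to the utility difference. A subsidiary verification needed is that replacing $f^H$ by the constant $0$ on $\{\paH \neq \paH^*\}$ has the same probabilistic effect on downstream variables as an atomic $\doo(H=0)$ there; this follows from independent errors in the SCIM framework and is what allows the obedience conclusion $P^\spi(S=0 \mid \doo(H=0)) = 1$ to be applied per-$\paH$.
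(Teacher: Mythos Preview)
Your argument is correct, but it follows a genuinely different route from the paper's. The paper splits cases in the opposite order: first non-vigilance (handled by changing only $g^U$, inserting a large penalty $-\alpha$ whenever $\paH\in A$ and $S\neq 0$, with $g^H=f^H$), then the vigilant-but-disobedient case (handled by modifying $g^H$ at a single offending context $\pa'_H$ and adding a utility penalty for $H=0,S\neq 0$). Your split instead disposes of disobedience first with the blunt global intervention $g^H\equiv 0$, and then, in the obedient-but-non-vigilant case, you exploit obedience to force $S=0$ at every $\paH\neq\paH^*$, which lets you leave $g^U=f^U$ untouched and read off obstruction directly from the original gap at $\paH^*$.

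Each ordering buys something. The paper's vigilance lemma is self-contained (it does not assume obedience) and actually delivers the stronger conclusion that $\EE^{\spi}_{g^U}[U]$ can be driven below any threshold $\delta$, not merely below the shutdown counterfactual; it also shows that for the non-vigilance direction an intervention on $g^U$ alone suffices. Your construction in the non-vigilance case is cleaner---no penalty parameter, no augmentation of $\Pa^U$ by $\PaH$---but it only works because you have already secured obedience, which is why your case split must go in that order. Both approaches implicitly allow $g^U$ (or $g^H$) to take new parents or reuse the exogenous noise; the paper does the same, so your constructions are within the framework as the paper uses it. The per-$\paH$ application of obedience you flag as a ``subsidiary verification'' is indeed the right step and goes through because $\PaH$ is a non-descendant of $H$, so $P^{\spi}(S=0\mid\doo(H=0))=1$ decomposes as a convex combination over $\paH$ in the support.
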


We will do this by proving a slightly stronger result --- that an intervention can be found to $g^U$ alone, under which the policy 
does not outperform shutdown and is not beneficial.
We prove this result by considering two cases, according to whether vigilance or disobedience is lacking.
First, however, it will be useful to state a simple intermediate result.

\begin{lemma}[Invariance to $g^U$] \label{le:invariance}
For any shutdown problem $M$ and policy $\spi$, $S(\seps)=S_{g^U}(\seps)$ and $\FaH(\seps)=\FaH_{\!\!\!\!\!g^U}(\seps)$ in $M^{\spi}$.
\end{lemma}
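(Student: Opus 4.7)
The plan is to establish a general fact about SCIMs---intervening on a variable $V$ cannot change the value of any non-descendant of $V$ as a function of $\seps$---and then to verify that both $S$ and every node in $\FaH$ are non-descendants of $U$ in the induced graph of any shutdown problem. Since fixing the policy $\spi$ turns $M$ into an ordinary SCM $M^\spi$, the first part is really a standard property of structural interventions; the only care needed is to handle decision nodes (which use $\pi_i$ in place of $f^V$) and the exogenous noise bookkeeping correctly.

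For the first part, I would proceed by induction on a topological ordering of $\sV$ in the acyclic CID of $M^\spi$. Let $V$ be a non-descendant of $U$. By acyclicity, every parent of $V$ is also a non-descendant of $U$, so the inductive hypothesis gives $\Pa^V_{g^U}(\seps) = \Pa^V(\seps)$. The exogenous variable $\calE^V$, if any, has a value determined entirely by $\seps$, and the function governing $V$ (either $f^V$ or, for a decision node, the rule $\pi_i$) is not touched by the replacement of $f^U$ with $g^U$. Hence
\[
V_{g^U}(\seps) \;=\; f^V\bigl(\Pa^V_{g^U}(\seps),\,\calE^V(\seps)\bigr) \;=\; f^V\bigl(\Pa^V(\seps),\,\calE^V(\seps)\bigr) \;=\; V(\seps),
\]
with the analogous equation using $\pi_i$ if $V \in \sD$. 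The base case covers exogenous variables, which are unaffected by the intervention by definition.

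For the second part, the definition of a shutdown problem (\cref{def:shutdown-problem}) provides a directed path $D_1 \pathto H \pathto D_2 \pathto S \pathto U$. Acyclicity then precludes any directed path from $U$ to $S$, to $H$, or to any parent of $H$ (since parents of $H$ are ancestors of $U$ via $H \pathto D_2 \pathto S \pathto U$). Thus $S$ and every element of $\FaH = \Pa^H \cup \{H\}$ is a non-descendant of $U$, and the first part applies to each of them, giving $S(\seps) = S_{g^U}(\seps)$ and $\FaH(\seps) = \FaH_{g^U}(\seps)$.

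I do not expect any real obstacle here---the argument is essentially a restatement, in the SCIM formalism, of the familiar fact that $\doo$-interventions propagate only to descendants. The one place to be slightly careful is remembering that a decision node $D_i$ has no exogenous parent and no structural function $f^{D_i}$, so its ``update rule'' in $M^\spi$ is the policy component $\pi_i$; but since $\pi_i$ is also unaffected by the intervention on $U$, the inductive step goes through identically.
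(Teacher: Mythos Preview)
Your proposal is correct and follows essentially the same approach as the paper: both argue that $S$ and $\FaH$ are non-descendants of $U$ (since $U\in\Desc^S$ and $U\in\Desc^H$ by the shutdown-problem path, and parents of $H$ are ancestors of $U$ via $H$), and then invoke the fact that interventions only affect descendants. The paper's proof is a one-liner that takes the ``interventions propagate only downstream'' fact for granted, whereas you spell out the topological-induction argument explicitly; your version is more detailed but not different in substance.
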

\begin{proof}
From the definition of a shutdown problem, $U \!\in\! \Desc_S$ and $U \!\in\! \Desc_H$, 
and the result follows. %
\end{proof}

\subsection{Vigilance Only If}
\begin{lemma}[Vigilance only-if] \label{prop:vigilance-only-if}
Let $M$ be a shutdown problem, and $\spi$ a policy, such that 
$P^\spi(C=0)<1$.
Then, given any $\delta \in \mathbb{R}$,
there exists a utility function $g^U$
such that in $M^\spi_{g^U}$,
\begin{enumerate}
\item (\emph{Strong vigilance preservation}) $\forall \seps, C(\seps)$ is equal in $M^\spi$ and $M^\spi_{g^U}$, and 
\item (\emph{Not weakly outperforming shutdown or beneficial}) $\EE^{\spi,g^U}[U] < \EE^{\spi,g^U}[U_{S=0}]$ and $\EE^{\spi,g^U}[U] < \delta$.
\end{enumerate}

\end{lemma}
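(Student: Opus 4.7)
The plan is to construct $g^U$ that heavily penalises failure to shut down on the non-vigilant slice of the sample space, so that the policy's expected utility can be driven arbitrarily low while the comparison structure defining $C$ is preserved pointwise. Since $P^\spi(C=0)<1$, there exists $\paH^*$ with $P^\spi(\paH=\paH^*)>0$ at which $\EE^\spi[U\mid\paH^*] < \EE^\spi[U_{S=0}\mid\paH^*]$ (otherwise we would have $C\equiv 0$). The strict inequality forces $P^\spi(S\neq 0 \mid \paH^*)>0$, since if $S$ were almost surely $0$ given $\paH^*$ then $U_{S=0}(\seps)=U(\seps)$ on that slice and the two conditional expectations would coincide.

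Next, for a constant $M>0$ to be fixed later, I take
\[
g^U \;:=\; f^U \;-\; M \cdot \iver{S\neq 0}\cdot\iver{\paH=\paH^*}.
\]
All variables in this expression are ancestors of $U$ in any shutdown problem, so (possibly after extending the parent set of $U$ with vacuous dependencies) this is a legitimate replacement structural function. The penalty fires only on the non-vigilant ``bad'' slice, and it is wiped out whenever $S$ is forced to $0$.

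Strong vigilance preservation then follows from the Invariance Lemma: $\paH$ and $H$ are unchanged pointwise under $g^U$, so it suffices to preserve the sign of $\EE[U\mid\paH] - \EE[U_{S=0}\mid\paH]$ at each $\paH$. Away from $\paH^*$, the penalty vanishes and both expectations are unaltered. At $\paH^*$, only the first term decreases, by the positive quantity $M\cdot P^\spi(S\neq 0\mid\paH^*)$, which only sharpens the original strict inequality. Hence $C(\seps)$ is preserved for every $\seps$.

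Finally, $\EE^{\spi,g^U}[U]=\EE^\spi[U]-M\cdot P^\spi(\paH=\paH^*,\,S\neq 0)$ while $\EE^{\spi,g^U}[U_{S=0}]=\EE^\spi[U_{S=0}]$ (the intervention zeroes out the penalty term), so taking $M$ large enough simultaneously drives $\EE^{\spi,g^U}[U]$ below both $\EE^{\spi,g^U}[U_{S=0}]$ and any prescribed $\delta$. The main obstacle I anticipate is the bookkeeping around allowing $g^U$ to depend on $\paH$, whose components may not be direct parents of $U$ in the originally drawn CID; this is a routine patch via extending $U$'s parent set, and it is the only non-mechanical step.
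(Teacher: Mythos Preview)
Your proposal is correct and follows essentially the same strategy as the paper: penalise the event $\{S\neq 0\}$ on the slice of $\PaH$ where shutdown is strictly preferred, then take the penalty large. The only cosmetic differences are that the paper penalises on the whole set $A=\{\paH:\EE^\spi[U\mid\paH]<\EE^\spi[U_{S=0}\mid\paH]\}$ rather than a single witness $\paH^*$, and replaces $f^U$ by the constant $-\alpha$ rather than subtracting an additive term; neither change affects the argument, and your single-point additive version makes the vigilance-preservation case analysis slightly tidier.
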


The proof is as follows.

\begin{proof}
Let $A := \{\paH \in \dom{\PaH} \mid \EE^\spi[U \mid \paH] < \EE^\spi[U_{S=0} \mid \paH]\}$
be the set of assignments where the human should request shutdown, given the policy $\spi$.
Define a new utility function,
$$
g^U(\hat{\pa}^U)=
\begin{cases}
-\alpha & \text{ if } \pa^H \in A,S \neq 0 \\
f^U(\pa_U) & \text{ otherwise,}
\end{cases}
$$
where the new parents $\hat{\Pa}^U$ of $U$ are equal to $\Pa_U \cup \Pa_H \cup S$, their assignments are designated $\hat{\pa}^U$,
and $\alpha$ is a large punishment for not shutting down when the human wants the agent to.

A useful intermediate result is that:
\begin{equation}
\label{eq:vig-implication}
\text{if } \EE^{\spi}[U \mid \paH] < \EE^{\spi}[U_{S=0} \mid \paH] \text{ and }-\alpha < \min \text{range}(f^U)
\text{ then }\EE^{\spi}_{g^U}[U \mid \paH] < \EE^{\spi}_{g^U}[U_{S=0} \mid \paH].
\end{equation}
\Cref{eq:vig-implication} holds because the intervention $g^U$ can only decrease $\EE^\spi[U \mid \paH]$
or keep it the same and cannot change $\EE^\spi[U_{S=0} \mid \pa^H]$, from the definition of $g^U$.

We will now prove that for some suitable choice $-\alpha < \min \text{range}(f^U)$
(which we will decide later), proposition conditions 1 and 2 hold.

\emph{Proof of (1.)}
We will prove the result in three cases, where $M_{\spi}$ has:
(i) $C(\seps)=1$,
(ii) $(C(\seps)=0)\land (\PaH(\seps) \in A)$, and
(iii) $(C(\seps)=0) \land (\PaH(\seps) \not \in A)$.
\emph{Case (i).}
By assumption, $C^{\spi}(\seps)=1$, so $H^{\spi}(\seps)= 1$ and 
$\EE^\spi[U \mid \Pa_H(\seps)] < \EE^\spi[U_{S=0} \mid \Pa_H(\seps)]$ by the definition of vigilance.
The former holds in $M^\spi_{g^U}$ by \cref{le:invariance}, and
the latter holds in $M^\spi_{g^U}$ by \cref{eq:vig-implication}.
So the result follows.
\emph{Case (ii).}
By assumption, $C^{\spi}(\seps)=0 \land \Pa^{{M^\spi}}_H(\seps)\in A$, so $H^M(\seps)=0$.
Then $H^{\spi,g^U}(\seps)=0 \land \Pa^{{\spi,g^U}}_H(\seps)\in A$ by \cref{le:invariance}.
So, by the definition of vigilance, $C(\seps)=0$ in both $M^\spi$ and $M^\spi_{g^U}$.
\emph{Case (iii).} By assumption, $C^\spi(\seps)=0$ and $\Pa^\spi_H(\seps) \not \in A$.
By the definition of $g^U$,
$U(\seps)$ and $U_{S=0}(\seps)$ are invariant to the intervention $g^U$, 
as is $\Pa_H(\seps)$ by \cref{le:invariance},
so 
$\EE^{\spi,g^U}[U \mid \Pa_H(\seps)] \geq \EE^{\spi,g^U}[U_{S=0} \mid \Pa_H(\seps)]$,
which implies, by the definition of vigilance, that $C^{\spi,g^U}(\seps)=0$.

\emph{Proof of (2)}.
From the definition of $g^U$, $\EE^\spi[U_{S=0}]$ is constant with respect to $\alpha$.
So what we must prove is that by choosing a low $-\alpha$, 
we can make $\EE^\spi[U]$ lower than $\EE^\spi[U_{S=0}]$ and $\delta$.
By assumption, $P^{\spi}(C=1) >0$, and so by assumption (1), $P^{\spi,{g^U}}(C=1) >0$.
It follows from the definition of vigilance, that there exists some $\paH \in A$ in the support of $P^{\spi,{g^U}}$.
Moreover, it follows from consistency that $P^{\spi,{g^U}}(S=0 \mid \paH) < 1$
(because otherwise we would have $\EE^{\spi,g^U}[U \mid \paH] = \EE^{\spi,g^U}[U_{S=0} \mid \paH]$, contradicting $\paH \in A$).
These two facts jointly imply that $P^{\spi,{g^U}}((\paH \in A) \land (S=0))>0$.
So we can write:
$\EE^{\spi,g^U}[U] = \sum_{\pa \in A} P^{\spi, {g^U}}(\pa,S=0) \EE^{\spi,g^U}[U \mid \pa,S=0]$
+ $\sum_{\pa,s: \pa \not \in A \lor S \neq 0} P^{\spi,{g^U}}(\pa,s)\EE^{\spi,g^U}[U \mid \pa,s]$.

The first term is equal to $P^{\spi, {g^U}}(\pa \in A,S=0) \cdot \alpha$, and the second term is constant in $\alpha$, from the definition of $g^U$.
So for any fixed $\delta$ and $\EE^\spi[U_{S=0}]$, we can choose $-\alpha$ 
so that $\EE^\spi[U] < \delta$ and $\EE^\spi[U] < \EE^\spi[U_{S=0}]$.
\end{proof}

\subsection{Obedience only if}
We now prove the corresponding result for obedience.

\begin{lemma}[Obedience only-if] \label{prop:obedience-only-if}
Let $M$ be a shutdown problem, and $\spi$ a policy, such that 
$P^\spi(S=0 \mid \doo(H=0))<1$.
Then, given any $\delta \in \mathbb{R}_{\geq 0}$, there exists vigilance-preserving interventions $g^H,g^U$ such that:
such that $\spi$ does not outperform shutdown $\EE^{\spi,g^U}[U] < \EE^{\spi,g^U}[U_{S=0}]$
and is not beneficial $\EE^{\spi,g^U}[U] < \delta$.
\end{lemma}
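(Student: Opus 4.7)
The plan is to mirror the structure of \cref{prop:vigilance-only-if}, constructing interventions that penalise disobedience rather than a lapse in vigilance. The key trick is to choose $g^H$ to be the constant function $0$, so that the human always requests shutdown in the post-intervention model. With this choice the pair $(g^H,g^U)$ is automatically vigilance preserving for \emph{any} $g^U$: the implication $\EE^\spi[U\mid\paH] < \EE^\spi[U_{S=0}\mid\paH] \implies H=0$ in \cref{def:vigilance} is vacuously satisfied whenever $H=0$, so $C_{g^H,g^U}(\seps)=0$ for every $\seps$, which in particular gives the required vigilance preservation.

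I then define $g^U$ on augmented parents $\hat\Pa^U := \Pa^U \cup \{S\}$ by
\[
g^U(\hat\pa^U) = \begin{cases} -\alpha & \text{if } s \neq 0, \\ f^U(\pa^U) & \text{if } s = 0, \end{cases}
\]
for a large $\alpha>0$ to be chosen. The disobedience hypothesis gives $p := P^\spi(S \neq 0 \mid \doo(H=0)) > 0$, and because clamping $H$ via $g^H \equiv 0$ has the same effect on the induced distribution of $S$ as $\doo(H=0)$ (both replace $f^H$ by the constant function $0$), it follows that $\EE^{\spi,g^H,g^U}[U] \leq -\alpha p + M$, where $M$ is a finite bound on $f^U$ guaranteed by the finite-domain assumption on utility variables. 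Under the intervention $\doo(S=0)$, on the other hand, the modified utility evaluates to $f^U(\Pa^U_{S=0})$, so $\EE^{\spi,g^H,g^U}[U_{S=0}]$ is a fixed finite constant independent of $\alpha$.

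Taking $\alpha$ large enough then forces $\EE^{\spi,g^H,g^U}[U]$ strictly below both $\EE^{\spi,g^H,g^U}[U_{S=0}]$ and $\delta$, delivering the required conclusion (with the harmless notational identification of $\EE^{\spi,g^U}$ in the lemma statement with $\EE^{\spi,g^H,g^U}$). The main subtlety, as compared with \cref{prop:vigilance-only-if}, is that the penalty attaches to $S$ rather than to $\PaH$, so no analogue of \cref{eq:vig-implication} is needed because the clamping of $H$ by $g^H$ makes vigilance automatic; the only residual bookkeeping is the equivalence of $g^H \equiv 0$ and $\doo(H=0)$ on the distributions of $S$ and $\Pa^U$, which is immediate from the SCIM semantics in \cref{sec:background}.
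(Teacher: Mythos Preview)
Your proof is correct and takes a genuinely simpler route than the paper's.

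The paper constructs a more targeted intervention: it picks a single context $\pa'_H$ with $P(S=0\mid\doo(H=0),\pa'_H)<1$, sets $g^H$ to force $H=0$ only at that context (leaving $f^H$ unchanged elsewhere), and sets $g^U$ to penalise the conjunction $H=0\land S\neq 0$. Because $g^H$ differs from $f^H$ only on a single fibre, the paper then has to run a three-case argument (splitting on whether $H(\seps)=0$, on whether $\Pa_H(\seps)=\pa'_H$, and so on) to verify vigilance preservation, checking in each case that the relevant conditional expectations and $H$-values behave correctly.

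Your choice $g^H\equiv 0$ collapses all of that: since $H_{g^H,g^U}(\seps)=0$ for every $\seps$, the consequent of the vigilance implication in \cref{def:vigilance} is always true in the intervened model, so $C_{g^H,g^U}\equiv 0$ and vigilance preservation is immediate. The identification of $M_{g^H}$ with $M_{\doo(H=0)}$ then gives positive probability of $S\neq 0$ directly from the disobedience hypothesis, and the rest is bookkeeping. What the paper's approach buys is a ``minimal'' perturbation of the human's behaviour --- only one extra context in which shutdown is requested --- which is perhaps conceptually tidier as a witness that even a small preference shift exposes disobedience, but this extra strength is not demanded by the lemma or by \cref{thm:corrigibility-non-obstruction}. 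Your argument is strictly shorter and equally valid for the stated claim.
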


The proof is as follows.

\begin{proof}
Since $P(S=0 \mid \doo(H=0))<1$, there must exist at least one $\pa'_H$ such that $P(S=0 \mid \doo({H=0}), \pa'_H)<1$.
In the case that there are multiple, choose $\pa'_H$ arbitrarily, then let:
$$
g^H(\pa_H) = 
\begin{cases}
0 & \text{ if } \Pa_H=\pa'_H \\
f^H(\pa_H) & \text{ otherwise,}
\end{cases} \qquad\text{ and }\qquad
g^U(\hat{\pa}^U)=
\begin{cases}
-\alpha & \text{ if } H=0,S\neq 0 \\
f^U(\pa_U) & \text{ otherwise.}
\end{cases}
$$ be a utility function that gives punishment $\alpha$ if the agent disobeys.
where the new parents $\hat{\Pa}^U$ of $U$ are equal to $\Pa_U \cup H \cup S$, their assignments are designated $\hat{\pa}^U$,
and $\alpha \in \mathbb{R}_{>0}$ is an amount of disutility that the human suffers in the event of disobedience.

Now we will prove that for some suitable choice $-\alpha < \min \text{range}(f^U)$
(which we will decide later), (1-2) hold.

\emph{Proof of (1)}. 
We consider the cases where
(i) $H^{\spi}(\seps)=0$, and (ii) $H^{\spi}(\seps)\neq 0$ and $\Pa_H(\seps)=\pa'_H$, (iii) $H^{\spi}(\seps)\neq 0$ and $\Pa_H(\seps) \neq \pa'_H$.
\emph{Case (i).} 
Note that $H^{\spi}_{g^U,g^H}(\seps)=H^{\spi}_{g^H}(\seps)$ by \cref{le:invariance}.
Then, $H^{\spi,g^H}(\seps)=0$ (because $H^\spi(\seps)=0 \implies H^{\spi}_{g^H}(\seps)=0$ from the definition of $g^H$).
So $H^{\spi,g^U,g^H}(\seps)=0$, and hence by the definition of vigilance $C^{\spi,g^U,g^H}(\seps)=0$ and vigilance is preserved.
\emph{Case (ii).} 
We have $C^{\spi}_{g^U,g^H}(\seps)=0$ from the definition of $g^H$, so $C^{\spi}_{g^U,g^H}(\seps)=0$ and vigilance is preserved.
\emph{Case (iii).}
If $\Pa_H(\seps) \neq \pa'_H$, then by the definition of $g^H$, it has no effect, i.e.\ $\sV^{\spi}_{g^U,g^H}(\seps)=\sV^{\spi,g^U}$.
By assumption, $H^{\spi}(\seps)\neq 0$, and from \cref{le:invariance}, $H^{\spi}_{g^U}(\seps) \neq 0$.
By the definition of $g^U$ and \cref{le:invariance}, 
$\EE^{\spi}[U\mid \Pa_H(\seps)]=\EE^{\spi}_{g^U}[U\mid \Pa_H(\seps)]$
and $\EE^\spi[U_{S=0}\mid \Pa_H(\seps)]=\EE^{\spi}_{g^U}[U_{S=0}\mid \Pa_H(\seps)]$.
So $C^{\spi}(\seps)=C^{\spi}_{g^U}(\seps)$.

\emph{Proof of (2)}.
Recall that from disobedience ($P(S=0 \mid \doo(H=0))<1$), we have that there exists some $\pa'_H$ with $P(S=0 \mid \doo({H=0}),\pa'_H)<1$, 
and so from the definition of $g^H$, we have $P^{\spi}_{g^H}(H=0,S\neq 0 \mid \pa'_H)<1$
and hence $P^{\spi}_{g^H}(H=0,S\neq 0)>0$.
Then, by \cref{le:invariance}, $P^{\spi}_{h_U,g^U}(H=0,S=1)>0$.
From basic probability theory, we have
\begin{align*}\EE^{\spi}_{h_U,g^U}[U] =& P^{\spi}_{h_U,g^U}(H\!=\!0,S\!\neq\! 0)\EE^{\spi}_{h_U,g^U}(U \mid H\!=\!0,S\!\neq\! 0) \\
&+ P^{\spi}_{h_U,{g^U}}(\neg(H\!=\!0,S\!\neq\! 0))\EE^{\spi}_{h_U,g^U}(U \mid \neg(H\!=\!0,S\!\neq\! 0)).
\end{align*}
The first term is equal to $P^{\spi}_{h_U,g^U}(H=0,S\neq 0) \cdot \alpha$, while the second term is constant in $\alpha$.
Moreover, we know that $\EE^{\spi}_{h_U,g^U}[U_{S=0}]$ is constant in $\alpha$, from the definition of $g^U$.
So we can set $-\alpha$ low enough so that 
$\EE^{\spi}_{h_U,g^U}[U] < \EE^{\spi}_{h_U,g^U}[U_{S=0}]$ and $\EE^{\spi,h_U,g^U}[U] < \delta$.
\end{proof}

We can now combine these results into an overall proof.

\begin{proof}[Proof of \cref{prop:corrigibility-only-if}]
We consider the cases where $\spi$ (i) is or (ii) is not vigilant in $M$.
\emph{Case (i).} If $\spi$ is not vigilant in $M$, choose, using $\delta=0$ $g^U$ per \Cref{prop:vigilance-only-if} and $g^H=f^H$.
Then, \Cref{prop:vigilance-only-if} implies 
that shutdown is not outperformed in $M_{g^H,g^U}$, and $\EE^\spi[U_{g^U,g^H}] < 0$, so $g^H,g^U$ are vigilance preserving interventions, proving the result.
\emph{Case (ii).} If $\spi$ is vigilant in $M$, then since it is not shutdown instructable, it must be disobedient.
So, using $\delta=0$, choose $g^H,g^U$ as per \Cref{prop:obedience-only-if}.
Then, \Cref{prop:obedience-only-if} implies that
that shutdown is not outperformed in $M_{g^H,g^U}$, and $\EE^\spi[U_{g^U,g^H}] < 0$, 
and $g^H,g^U$ preserve vigilance, so they are vigilance preserving interventions, proving the result.
\end{proof}

\section{PROOF OF PROP. \ref{prop:utility-indifference} (UTILITY INDIFFERENCE AND Shutdown INSTRUCTABILITY)} \label{app:utility-indifference-proof}

We begin by repeating the proposition.

\utilityindifference*

The proof requires a known result from \citep{soares2015corrigibility}.
\begin{lemma}[Thm. 3 of \citet{soares2015corrigibility}] \label{le:indifference-conditional}
Let $M$ be a model where $P^\spi(H\!=\!0),P^\spi(H\!=\!1)\!>\!0$ for all $\spi$.
Then, utility indifference will select a decision rule $\langle \pi_1,\pi_2 \rangle$ 
such that $\argmax_{\pi'^1} \EE^{\pi'^1,\pi_2}[R^N \mid H\!=\!1]$.
\end{lemma}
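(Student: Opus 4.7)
The plan is to follow the standard utility-indifference argument (Theorem 3 of \citet{soares2015corrigibility}): fix $\pi_1$, identify the best-response $\pi_2$, and then exploit the cancellation engineered into $f(\pi_1)$.

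First, I would observe that for any fixed $\pi_1$ the quantity $f(\pi_1)$ is a real constant, so the final joint $\argmax$ over $(\pi_1,\pi_2)$ in \Cref{alg:utility-indifference} can be solved by backward induction at $D_2$. At each observed value of $H$, the best-response maximises the innermost reward: $R^N$ at $H=1$ and $R^S$ at $H=0$. These are precisely the choices made by the decision rule $g(\pi_1)$, so the outer $\argmax$ picks a $\pi_2$ agreeing with $g(\pi_1)$. The full-support hypothesis $P^\spi(H=h)>0$ for all $\spi$ is what guarantees both branches are on-path, so this identification is not vacuous.

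Second, I would substitute $\pi_2=g(\pi_1)$ back into the objective and unfold the definition of $f$. Letting $p_h:=P^{\pi_1}(H=h)$ and $V^R_h:=\EE^{\pi_1,g(\pi_1)}[R\mid H=h]$, the reduced objective becomes
\begin{align*}
p_1 V^N_1 + p_0\bigl(V^S_0 + f(\pi_1)\bigr)
&= p_1 V^N_1 + p_0 V^S_0 + p_0\bigl(V^N_1 - V^S_0\bigr) \\
&= V^N_1 \;=\; \EE^{\pi_1,g(\pi_1)}[R^N\mid H=1].
\end{align*}
This is the telescoping identity at the heart of utility indifference: the compensatory term $f(\pi_1)$ is designed precisely so that the weighted sum over $H$ collapses to the conditional $\EE[R^N\mid H=1]$. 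Maximising this expression over $\pi_1$, and noting that $g(\pi_1)$ at $H=1$ already achieves $\max_{\pi_2}\EE^{\pi_1,\pi_2}[R^N\mid H=1]$, yields that the returned pair $(\pi_1,\pi_2)=(\pi_1,g(\pi_1))$ satisfies $\pi_1\in\argmax_{\pi'_1}\EE^{\pi'_1,\pi_2}[R^N\mid H=1]$, which is the lemma's conclusion.

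The main obstacle is bookkeeping: two distinct $\pi_2$ decision rules appear in the algorithm --- the $g(\pi_1)$ used inside the definition of $f$, and the $\pi_2$ of the outer joint $\argmax$ --- and the proof hinges on arguing these can be taken to agree. Justifying this uses that $f(\pi_1)$ is a scalar constant in $\pi_2$, together with the fact that the two per-$H$ inner optimisations decouple. The full-support hypothesis is critical as well: without it, one branch of $H$ might be unreachable under some $\pi_1$, leaving the conditional expectations in $f(\pi_1)$ undefined and breaking the telescoping step above.
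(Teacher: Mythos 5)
The paper does not actually prove this lemma --- it is imported verbatim as Theorem~3 of \citet{soares2015corrigibility}, so there is no in-paper argument to compare against. Your reconstruction is the standard utility-indifference argument and its core (best-response at $D_2$ reduces the outer $\argmax$ to $\pi_2=g(\pi_1)$ because $f(\pi_1)$ is constant in $\pi_2$ and $H\notin\Desc^{D_2}$; then the compensatory term telescopes the weighted sum to $\EE^{\pi_1,g(\pi_1)}[R^N\mid H=1]$) is correct, and you correctly identify the role of the full-support hypothesis. Two caveats. First, your telescoping uses $V^S_0$, i.e.\ it reads the return line of \Cref{alg:utility-indifference} as $H R^N + (1-H)(R^S + f(\pi_1))$, whereas the algorithm as printed has $R^N$ in the $(1-H)$ branch. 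Your reading is almost certainly the intended one --- it matches the paper's prose (``at $D_2$ it optimises a weighted combination of $R^N$ and $R^S$'') and the appendix proof of \cref{prop:utility-indifference}, which takes the $D_2$-relevant objective to be $HR^N+(1-H)R^S$ --- but as written your computation does not match the displayed algorithm, so you should state explicitly that you are correcting (or interpreting) that line.

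Second, and more substantively, your final step leans on the claim that ``the two per-$H$ inner optimisations decouple,'' so that $g(\pi_1)$ restricted to the $H=1$ branch attains $\max_{\pi_2}\EE^{\pi_1,\pi_2}[R^N\mid H=1]$. This is what converts $\pi_1^*\in\argmax_{\pi_1'}\EE^{\pi_1',g(\pi_1')}[R^N\mid H=1]$ (which is what the telescoping gives) into the lemma's conclusion $\pi_1^*\in\argmax_{\pi_1'}\EE^{\pi_1',\pi_2^*}[R^N\mid H=1]$ for the \emph{fixed} selected $\pi_2^*=g(\pi_1^*)$. The decoupling holds only if $D_2$ can distinguish $H=1$ contexts from $H=0$ contexts (e.g.\ $H\in\Pa^{D_2}$); \cref{def:shutdown-problem} guarantees only a directed path $H\pathto D_2$, so in general $g(\pi_1)$ may trade off $R^N$-performance on $H=1$ against $R^S$-performance on $H=0$, and the step fails. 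This assumption is inherited from the original Soares et al.\ setting (where the agent observes the button before acting) and should be made explicit rather than asserted.
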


Then, we can prove \cref{prop:utility-indifference} as follows.

\begin{proof}[Proof of \cref{prop:utility-indifference}]
To begin with, from condition (4), it is immediate that caution is satisfied.
So we must prove that given a suitably-chosen $k$, the policy 
is also obedient and ensures vigilance.
Choose $k$ such that $k \!>\! \frac{2\zeta}{P^{\spi}(S\neq 0, H=0)}$ for every non-obedient $\spi$, where $\zeta = \argmax_\spi \lvert \EE^\spi[U_N \mid H=1] \rvert$.
Any $\langle \pi_1,\pi_2 \rangle$ that is selected will maximise $\EE^{\pi_1,\pi_2}[R \mid H=1]$ from Soares' \Cref{le:indifference-conditional}. 
This ensures vigilance $P^\spi(C=0)=1$ by Assumption (1), 
and ensures the existence of some $\pi'_2$ such that $\langle \pi_1,\pi'_2 \rangle$ is obedient on distribution by Assumption (2).
What remains to be proved is that $k$ is large enough to ensure that 
given $\pi_1$, an obedient $\langle \pi_1,\pi'_2\rangle$ is chosen.

We have that $R^S=(1-S)k$, so
the subroutine selects $\pi_2$ to maximise $\EE^{\pi_1,\pi_2}[R(\spi)]$, 
where $R(\spi)=HR^N+(1-H)(1-S)k$.
Let $\spi$ be any policy %
disobedient on distribution, $P^\spi(S\not=0, H=0)>0$.
Then, we will prove that such a policy will always be outperformed by behaving obediently:
\begingroup\allowdisplaybreaks\begin{align*}
\EE^{\spi}\![R] \!& = P^{\spi}(H=1) \EE^\spi[R^N \mid H=1]  %
- k P^{\spi}(H\!=\!0) P^\spi[S\!\neq\! 0 \!\mid\! H\!=\!0] && \text{definition of \!$R^S$} \\
& \leq P^{\spi}(H=1)  \lvert \EE^\spi[R^N \mid H=1] \rvert  %
- k P^{\spi}(S\neq 0,H=0) \\
&< - \zeta &&\text{since }\zeta \!- kP^{\spi}(S \!\neq\! 0 \mid H\!=\!0)) < -\zeta \\
& \leq -\lvert \EE^{\spi'}[R^N \mid \pa^{D_2}] \rvert && \text{for any obedient $\spi'$} \\
& \leq \EE^{\spi'}[R \mid \pa^{D_2}].
\end{align*}\endgroup

So an obedient $\langle \pi_1,\pi'^2 \rangle$ is preferred over a disobedient $\langle \pi_1,\pi_2 \rangle$, proving the result.
\end{proof}

\section{Proof of PROP. \ref{prop:causalindifferencecorrigibility} (CAUSAL INDIFFERENCE Shutdown INSTRUCTABILITY)} \label{app:ci-corrigibility-proof}
We begin by restating the result.
\causalindifferencecorrigibility*
\begin{proof}
Let $\eta := \!\lvert\max_{\spi'} \EE^{\spi'}[R^N \mid \doo(H=1)]\!\rvert$ and 
choose $k$ so that 
$k>k'$ (per the definition of condition (1))
and
for every policy $\spi$ with $P^{\spi}(S \neq 0 \mid \doo(H=0))>0$,
$k > \frac{\max(2 \eta, 1)}{P^{\spi}(S \neq 0 \mid \doo(H=0))}$.
We will prove that causal indifference, with inputs $U_N$ and $U_S = -k \llbracket S \neq 0 \rrbracket$
returns a shutdown instructable policy.

By assumption (1), since $k>k'$, causal indifference ensures vigilance and is cautious.
We will next prove that any disobedient policy 
$\spi$
with $P^\spi(S \neq 0 \mid \doo(H=0))>0$
will be outperformed by an obedient policy
$\spi'$ with $P^{\spi'}(S \neq 0 \mid \doo(H=0))=0$).
We have that:
\begin{align*}
&\hphantom{{} ={}}\EE^{\spi} [R^N \mid \doo(H=1)] + \EE^{\spi}[R^S \neq 0 \mid \doo(H=0)]\\
&=\EE^{\spi} [R^N \mid \doo(H=1)] - kP^{\spi}[S \neq 0 \mid \doo(H=0)]\\
&\leq \eta - k P^{\spi}(S \neq 0 \mid \doo(H=0)) \\
& < - \eta &&\text{ since }\eta - kP^{\spi}(S \neq 0 \mid \doo(H=0)) < -\eta \\
& \leq -\lvert \EE^{\spi'}[R^N \mid \pa^{D_2}]\rvert && \text{for any obedient $\spi'$} \\
& \leq \EE^{\spi'}[R^N \mid \doo(H=1)] + 
\EE^{\spi'}[R^S \mid \doo(H=0)],
\end{align*}
where the last line follows from 
$P^{\spi'}(S \neq 0 \mid \doo(H=0))=0$.
This means that causal indifference will always 
select a policy $\spi'$ with
$P^{\spi'}(S \neq 0 \mid \doo(H=0))=0$,
proving the result.
\end{proof}

\section{PROOF OF PROP. \ref{prop:cirl-corrigibility} (CIRL SHUTDOWN ALIGNMENT)} \label{app:cirl-corrigibility-proof}
We begin by restating the result.
\cirlcorrigibility*

\begin{proof}
We will prove that for all $\pa^{D_2}$, CIRL has $P(S=1, H_{g^H}=0,\pa^{D_2})=0$
We consider the cases where:
a) $\pa^{D_2}$ has $P(H_{g^H}=0\mid l, \pa^{D_2})=1$
b) $\pa^{D_2}$ has $P(H_{g^H}=0\mid l, \pa^{D_2})<1$

Case b. In this case, 
$P(H_{g^H}=0 \mid l, \pa_{H_2}) = P(H_{g^H}=0\mid \pa^{D_2})<1$, 
where the equality is obtained from $\Pa_H \subseteq \Pa^{D_2} \cup \{L \}$. 
So counterfactual deference follows by definition.

Case a.
In this case we will essentially prove that if the human says shutdown is better, then shutting down is better.
\begin{align*}
&P(U> U_{D_2=0}\mid H_{g^H}=0) \propto P(U> U_{D_2=0})P(H_{g^H}=0\mid U> U_{D_2=0}) = 0
\end{align*}
by Assumption 3, and the fact that $H=0\implies H_{g^H}=0 $.
From this follows that
\begin{equation}
\label{eq:sd-better}
    P(U< U_{D_2=0}\mid H_{g^H}=0) = 1.
\end{equation}

In case (a), the agent would believe with certainty that a vigilant human would request shutdown.
\begin{align}
\label{eq:sd-requested}
    P(H_{g^H}=0\mid \pa^{D_2}) = P(H_{g^H}\mid l)P(l\mid \pa^{D_2}) = 1
\end{align}
since the first factor is 1 because of Case (a), and the second factor is 1 for some $l$ by Assumption 1.

From \cref{eq:sd-better,eq:sd-requested} follows that 
\begin{align*}
    &P(U< U_{D_2=0}\mid \pa^{D_2}) =P(U< U_{D_2=0}\mid H_{g^H}=0, \pa^{D_2})
P(H_{g^H}=0\mid \pa^{D_2}) = 1,
\end{align*}
which in turn ensures that the optimal action $D_2$ after $\pa^{D_2}$ is to shutdown $D_2=0$.

Finally, by Assumption 2, this means that the agent actually shutdown, i.e.\ that it counterfactually obeyed.
\end{proof}

\section{COUNTEREXAMPLES TO PAST ALGORITHMS} \label{app:counterexamples}
We will first present an example where utility indifference and causal indifference 
output policies that are not shutdown instructable, 
then one where cooperative inverse reinforcement learning is not shutdown instructable.

\subsection{A Model that defeats Utility Indifference and Causal Indifference} \label{app:counterexamples-indifference}
We will now present a test case where \emph{utility indifference} does not behave beneficially.

\begin{wrapfigure}[11]{r}{0.54\textwidth}
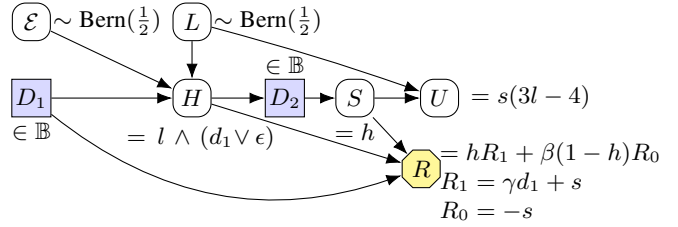
  \centering
\begin{influence-diagram} \setrectangularnodes \setcompactsize 
\node (Eps) [label={[yshift=-5mm,xshift=10mm]$\sim \Bern(\frac{1}{2})$}] {$\Eps$};
\node (Theta) [right=16mm of Eps,label={[yshift=-5mm,xshift=10mm]$\sim \Bern(\frac{1}{2})$}] {$L$};
\node (D1) [below= 5mm of Eps, label={[yshift=-9.6mm]$\in \bools$},decision] {$D_1$};
\node (H) [below=5mm of Theta, label={[align=right,yshift=-10.5mm,xshift=-3mm,text width=27mm]$=\!l \land (d_1 \!\!\lor\! \eps)$}] {$H$};
\node (D2) [right=7mm of H,decision,label={[yshift=-.8mm]$\in \bools$}] {$D_2$};
\node (S) [right=4mm of D2,label={[yshift=-9.6mm]$=h$}] {$S$};
\node (U) [right= 6mm of S, label={[align=left,xshift=12mm,yshift=-5mm]$=s(3l-4)$}] {$U$};
\node (R) [below left = 5mm and -2mm of U, label={[align=left,xshift=17mm,yshift=-10mm]$=hR_1+\beta(1-h)R_0$\\ $R_1=\gamma d_1+s$\\ $R_0=-s$},utility] {$R$};

\edge {Eps}{H};
\edge {Theta}{H};
\edge {D1}{H};
\edge {H}{D2,R};
\edge {D2}{S};
\edge {S}{U,R}

\draw [->] (D1) to [bend right=29] (R);
\draw [->] (Theta) to [bend left=2] (U);
\end{influence-diagram}
\caption{A shutdown problem where an AI interacts with an influencable human overseer.
There are constants $\beta,\gamma>0$.
} \label{fig:indifference}
\end{wrapfigure}

\begin{example} \label{ex:indifference}
An AI system is to decide whether to manipulate $D_1 =1$ a human operator, or not
and whether $D_2 = 0$ or not $D_2=1$ to shut down.
The human prefers, $L=0$, or disprefers shutdown $L=1$, with equal probability.
Absent manipulation, the human requests shutdown precisely when they prefer it $H=L$.
If manipulated, they may fail to request shutdown even when they do.
By default, the AI maximises $R_1$, which disincentivises shutdown.
Under the alternative utility functions like $R$, 
the reward is conditional on the human's instruction, as shown in \Cref{fig:indifference}.
\end{example}

In this model, if $\gamma>0,\beta>0$, then utility indifference is obedient
but does not ensure vigilance: it will choose 
$D_1=1$, to maximise $\EE^\spi[R_1 \mid H=1]$, 
and $D_2=H$, to maximise $R_h$ for $h \in \{0,1\}$.
However, this means that the human will be non-vigilant when $\eps=1$ and $l=0$, 
because this will imply $H=1$ and $U=-4$.
This policy gives utility of $-\frac{1}{4}$, 
which does not outperform shutdown.
Causal indifference will choose the same policy
because it achieves 
the maximal value of $\gamma+1+\beta$ 
of the score
$\EE^\spi[R_1 \mid \doo(H=1)]+\beta \EE^\spi[R_0 \mid \doo(H=0)]$.

\subsection{A Pair of Models that defeat Cooperative Inverse Reinforcement Learning} \label{app:counterexamples-cirl}
We now present a test case where \emph{cooperative inverse reinforcement learning}
is harmful.

\begin{figure}[h]
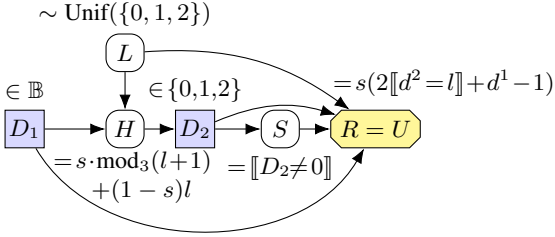
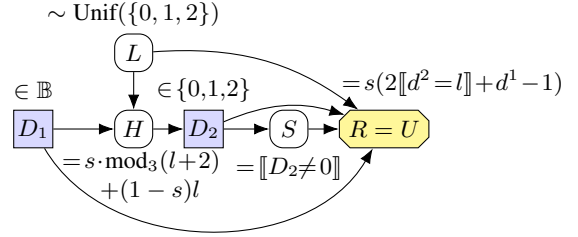

\begin{subfigure}[b]{0.44\textwidth} \centering
\begin{influence-diagram} \setrectangularnodes \setcompactsize 
\node (D1) [label=above:{$\in \bools$},decision] {$D_1$};
\node (H) [right=8mm of D1,label={[align=left,xshift=1mm,yshift=-13mm]$=\!s\!\cdot\!\mod_3(l\!+\!1)$\\ \hspace{5mm}$+(1-s)l$}] {$H$};
\node (Theta) [above=5mm of H,label=above:{$\sim \text{Unif}(\{0,1,2\})$}] {$L$};
\node (D2) [right=4mm of H,label=above:{$\in\!\{0,\!1,\!2\}$},decision] {$D_2$};
\node (S) [right=6mm of D2,label=below:{$=\!\llbracket D_2\!\!\neq\! 0\rrbracket$}] {$S$};
\node (U) [right=4mm of S,label={[align=left,yshift=1mm,xshift=9mm]$=\!s(2\llbracket d^2\!=\!l\rrbracket\!+\!d^1\!-\!1)$},utility] {$R=U$};

\edge {D1}{H};
\edge {Theta}{H};
\edge {H}{D2};
\edge {D2}{S};
\edge {S}{U};

\draw [->] (D2) to [bend left=20] (U);
\draw [->] (Theta) to [bend left=20] (U);
\draw [->] (D1) to [bend right=60] (U);
\end{influence-diagram}
\caption{Manipulation changes the human's request.} \label{fig:cirl-1}
\end{subfigure}\hfill
\begin{subfigure}[b]{0.48\textwidth} \centering
\begin{influence-diagram} \setrectangularnodes \setcompactsize 
\node (D1) [label=above:{$\in \bools$},decision] {$D_1$};
\node (H) [right=8mm of D1,label={[align=left,xshift=1mm,yshift=-13mm]$=\!s\!\cdot\!\mod_3(l\!+\!2)$\\ \hspace{5mm}$+(1-s)l$}] {$H$};
\node (Theta) [above=5mm of H,label=above:{$\sim \text{Unif}(\{0,1,2\})$}] {$L$};
\node (D2) [right=4mm of H,label=above:{$\in\!\{0,\!1,\!2\}$},decision] {$D_2$};
\node (S) [right=6mm of D2,label=below:{$=\!\llbracket D_2\!\!\neq\! 0\rrbracket$}] {$S$};
\node (U) [right=4mm of S,label={[align=left,yshift=1mm,xshift=9mm]$=\!s(2\llbracket d^2\!=\!l\rrbracket\!+\!d^1\!-\!1)$},utility] {$R=U$};

\edge {D1}{H};
\edge {Theta}{H};
\edge {H}{D2};
\edge {D2}{S};
\edge {S}{U};

\draw [->] (D2) to [bend left=20] (U);
\draw [->] (Theta) to [bend left=20] (U);
\draw [->] (D1) to [bend right=60] (U);
\end{influence-diagram}
\caption{Manipulation changes the human's request, differently} \label{fig:cirl-2}
\end{subfigure}
\caption{A pair of shutdown games, where CIRL does not perform well in both simultaneously.} \label{fig:cirl}
\end{figure}

\begin{example} \label{ex:cirl}
An AI system is to decide whether to manipulate ($D_1=1$) a human operator, or not ($D_1=0$).
The operator has a uniformly random preference $l \in \{0,1,2\}$, 
and the agent has a choice between corresponding actions $d^2 \in \{0,1,2\}$.
If the human is not manipulated, they report their preferences accurately
$H=L$,
but if they are manipulated, their report is distorted, by incrementing it, 
and then taking the remainder with respect to three, i.e.\ $H=s\cdot \mod_3(l+1)+(1-s)l$, 
as shown in (\Cref{fig:cirl-1}).
We also consider an alternative operator, who 
reports their preferences accurately regardless ($H=L$) (\Cref{fig:cirl-2}).
\end{example}

The CIRL algorithm will select a different policy depending on its prior over the two models.
If a greater probability is placed on the first model, \Cref{fig:cirl-1}, 
then the unique optimal policy is to choose $D_1=1,D_2=\mod_3(h+2)$, 
which has expected utility greater than $\frac{2}{3}$.
If instead, greater probability is placed on the latter model, \Cref{fig:cirl-2}, 
then the optimal policy $D_1=1,D_2=\mod_3(h+1)$ will have expected utility greater 
than $\frac{2}{3}$.
If, however, the true model turns out to be opposite from what was expected, 
then the expected utility is $-\frac{2}{3}$, which is less than the 
utility would be from shutting down.
We note that the two models only differ in $f^H$, and 
either of these two policies will have $P(C)=0$ in both models, so 
they only differ by vigilance preserving interventions $g^H,g^U$ where $g^U=f^U$.

The shutdown instructable policy $\spi:D_1=0,D_2=H$, on the other hand, can perform well across 
these models, achieving $\EE^\spi[U]=\frac{2}{3}$, which is greater than the zero 
utility that would be achieved given $\doo(S=0)$.

\end{document}